\definecolor{ForestGreen}{RGB}{34,139,34}
\newcommand{\homs}{\Hom}
\newcommand{\Hom}{\ensuremath{\mathsf{Hom}}\xspace}
\newcommand{\subs}{\Sub}
\newcommand{\Sub}{\ensuremath{\mathsf{Sub}}\xspace}
\newcommand{\indsubs}{\ensuremath{\mathsf{IndSub}}\xspace}
\newcommand{\injs}{\ensuremath{\mathsf{Inj}}\xspace}
\newcommand{\anchor}{\ensuremath{\circledast}}
\newcommand{\spasm}{\ensuremath{\mathsf{Spasm}}\xspace}
\newcommand{\homdetf}{\ensuremath{\mathsf{homdet}}\xspace}
\newcommand{\detf}{\ensuremath{\mathsf{det}}\xspace}
\newcommand{\isomorphic}{\simeq}
\newcommand{\gnn}{\ensuremath{\Psi}}
\newcommand{\gnnmodel}{\ensuremath{\Psi_\theta}}
\newcommand{\bfI}{\mathbf{I}}
\newcommand{\extraparams}{\mathbb{A}}
\newcommand{\extrabasis}{\mathbb{B}}
\newcommand{\supp}{\mathsf{Supp}}
\newcommand{\graphs}{\ensuremath{\Omega}}
\newcommand{\nop}[1]{}
\newcommand{\update}{\small \textsc{Upd}\xspace}
\newcommand{\aggregate}{\small\textsc{Agg}\xspace}
\newcommand{\claimqed}{\ensuremath{\triangleleft}}
\theoremstyle{remark}
\newtheorem{claim}{Claim}
\newcommand\xqed[1]{%
  \leavevmode\unskip\penalty9999 \hbox{}\nobreak\hfill
  \quad\hbox{#1}}
\newcommand\exend{\xqed{$\triangle$}}
\newcommand{\vartxt}[1]{\tiny $\pm{}#1$}
\newcommand{\vartxtm}[1]{\scriptsize $\pm{}#1$}
\newcommand{\subprob}{\#\textsc{Sub}\xspace}
\newcommand{\psubprob}{\textsc{p-\#Sub}\xspace}
\newcolumntype{\expand}{}
\long\@namedef{NC@rewrite@\string\expand}{\expandafter\NC@find}
  \def\problem@arg{#1}%
  \def\problem@framed{framed}%
  \def\problem@lined{lined}%
  \def\problem@doublelined{doublelined}%
    \def\problem@hline{}%
      \def\problem@hline{\hline\hline}%
      \def\problem@hline{\hline}%
    \def\problem@tablelayout{|>{\itshape}lX|c}%
    \def\problem@title{\multicolumn{2}{|l|}{%
        \raisebox{-\fboxsep}{\textsc{#2}}%
      }}%
    \def\problem@tablelayout{>{\itshape}lXc}%
    \def\problem@title{\multicolumn{2}{l}{%
        \raisebox{-\fboxsep}{\textsc{\large #2}}%
      }}%
\newcommand{\graphpreamble}{%
  \SetVertexNoLabel
  \SetVertexMath
  \SetVertexSimple[MinSize = 1pt,
  LineWidth = 0pt,
  LineColor = black,%
  FillColor = black]
  \renewcommand*{\VertexInnerSep}{1pt}

}
\newcommand{\drawStarthree}{%
  \begin{tikzpicture}[scale=0.14, baseline={(0,-0.06)}]%
    \GraphInit[vstyle=Classic]%
    \graphpreamble
    \SetGraphUnit{1}
    \Vertex[x=0, y=0]{A}
    \Vertex[x=1.2, y=-1]{B}
    \Vertex[x=0.0, y=1.3]{C}
    \Vertex[x=-1.2, y=-1]{D}
    \Edges(A,B)%
    \Edges(A,C)%
    \Edges(A,D)%
  \end{tikzpicture}
}
\newcommand{\drawCfour}{%
  \adjustbox{}{
  \begin{tikzpicture}[scale=0.24,rotate=45, baseline={(0,-0.0)}]%
    \GraphInit[vstyle=Classic]%
    \graphpreamble
    \SetGraphUnit{1}
    \Vertices{square}{A,B,C,D}
    \Edges(A,B,C,D,A)%
  \end{tikzpicture}
  }
}
\newcommand{\drawKthree}{%
\adjustbox{}{
  \begin{tikzpicture}[scale=0.23,rotate=180,baseline={(0,-0.45)}]%
    \GraphInit[vstyle=Classic]%
    \graphpreamble
    \SetGraphUnit{1}
    \Vertex[x=0, y=2]{A}
    \Vertex[x=1.5, y=2]{B}
    \Vertex[x=0.75, y=1]{C}
    \Edges(A,B,C,A)%
  \end{tikzpicture}
  }
}
\newcommand{\drawPtwo}{%
  \begin{tikzpicture}[scale=0.23,rotate=30,baseline={(0,0.02)}]%
    \GraphInit[vstyle=Classic]%
    \graphpreamble
    \SetGraphUnit{1}
    \Vertices{line}{A,B,C}
    \Edges(A,B,C)%
  \end{tikzpicture}
}
\newcommand{\drawPone}{%
  \begin{tikzpicture}[scale=0.23,rotate=30]%
    \GraphInit[vstyle=Classic]%
    \graphpreamble
    \SetGraphUnit{1}
    \Vertices{line}{A,B}
    \Edges(A,B)%
  \end{tikzpicture}
}
\newcommand{\drawKthreeplusone}{%
  \begin{tikzpicture}[scale=0.16,rotate=130,baseline={(0,-0.17)}]%
    \GraphInit[vstyle=Classic]%
    \graphpreamble
    \SetGraphUnit{1}
    \Vertex[x=0, y=2]{A}
    \Vertex[x=1.5, y=2]{B}
    \Vertex[x=0.75, y=1]{C}
    \Vertex[x=0.75, y=0]{D}
    \Edges(A,B,C,A)%
    \Edges(C,D)
  \end{tikzpicture}
}
\newcommand{\drawCfive}{%
  \begin{tikzpicture}[scale=0.18,rotate=17.5, baseline={(0,-0.1)}]%
    \GraphInit[vstyle=Classic]%
    \graphpreamble
    \SetGraphUnit{1}
    \Vertices{circle}{A,B,C,D,E}
    \Edges(A,B,C,D,E,A)%
  \end{tikzpicture}
}
\newcommand{\introexscale}{0.4}
\newcommand{\drawCfiveBigZ}{%
  \begin{tikzpicture}[scale=\introexscale,rotate=17.5, baseline={(0,0)}]%
    \GraphInit[vstyle=Classic]%
    \graphpreamble
    \SetGraphUnit{1}
    \Vertices{circle}{A,B,C,D,E}
    \Edges(A,B,C,D,E,A)%

  \end{tikzpicture}
}
\newcommand{\drawCfiveBigA}{%
  \begin{tikzpicture}[scale=\introexscale,rotate=17.5, baseline={(0,0)}]%
    \GraphInit[vstyle=Classic]%
    \graphpreamble
    \SetGraphUnit{1}
    \Vertices{circle}{A,B,C,D,E}
    \Edges(A,B,C,D,E,A)%

    \draw[dashed] (A) -- (C);
  \end{tikzpicture}
}
\newcommand{\drawCfiveBigB}{%
  \begin{tikzpicture}[scale=\introexscale,rotate=17.5, baseline={(0,0)}]%
    \GraphInit[vstyle=Classic]%
    \graphpreamble
    \SetGraphUnit{1}
    \Vertices{circle}{A,B,C,D,E}
    \Edges(A,B,C,D,E,A)%

   \draw[dashed] (A) -- (C);
    \draw[dashed] (B) -- (D);
  \end{tikzpicture}
}
\newcommand{\drawKthreeplusoneBigA}{%
  \begin{tikzpicture}[scale=\introexscale,rotate=0,baseline={(0,0.4)}]%
    \GraphInit[vstyle=Classic]%
    \graphpreamble
    \SetGraphUnit{1}
    \Vertex[x=0, y=0]{A}
    \Vertex[x=1.5, y=0]{B}
    \Vertex[x=0.75, y=1]{C}
    \Vertex[x=0.75, y=2]{D}
    \Edges(A,B,C,A)%
    \Edges(C,D)
  \end{tikzpicture}
}
\newcommand{\drawKthreeplusoneBigB}{%
  \begin{tikzpicture}[scale=\introexscale,rotate=0,baseline={(0,0.4)}]%
    \GraphInit[vstyle=Classic]%
    \graphpreamble
    \SetGraphUnit{1}
    \Vertex[x=0, y=0]{A}
    \Vertex[x=1.5, y=0]{B}
    \Vertex[x=0.75, y=1]{C}
    \Vertex[x=0.75, y=2]{D}
    \Edges(A,B,C,A)%
    \Edges(C,D)

    \draw[dashed] (D) to[out=190,in=150] (A);
  \end{tikzpicture}
}
\newcommand{\drawKthreeBigB}{%
  \begin{tikzpicture}[scale=\introexscale,rotate=0,baseline={(0,0.4)}]%
    \GraphInit[vstyle=Classic]%
    \graphpreamble
    \SetGraphUnit{1}
    \Vertex[x=0, y=0]{A}
    \Vertex[x=1.5, y=0]{B}
    \Vertex[x=0.75, y=1]{C}

    \Edges(A,B,C,A)%
  \end{tikzpicture}
}
\newcommand{\drawLthreeonetwist}{%
  \begin{tikzpicture}[scale=0.8,rotate=0, baseline={(0,0)}]%
    \GraphInit[vstyle=Classic]%
    \graphpreamble
    \SetGraphUnit{1}
    \Vertices{circle}{A,B,C,D,E,F,G,H}
    \Vertex{M}
    \Edges(A,B,C,D,E,F,G,H,A)%
    \Edges(B,H)
    \Edges(C,M,G)
    \Edges(D,F)
  \end{tikzpicture}
}
\newcommand{\drawLthreeoneCFI}{%
  \begin{tikzpicture}[scale=0.8,rotate=0, baseline={(0,0)}]%
    \GraphInit[vstyle=Classic]%
    \graphpreamble
    \SetGraphUnit{1}
    \Vertices{circle}{A,B,C,D,E,F,G}
    \Vertex[x=0,y=0.3]{M}
    \Vertex[x=0,y=-0.3]{N}
    \Edges(A,B,C,D,E,F,G)%
    \Edges(G,M,B,M,C)
    \Edges(A,N,F,N,E)
  \end{tikzpicture}
}
\newcommand{\drawCfiveTwist}{%
  \begin{tikzpicture}[scale=0.8,rotate=0, baseline={(0,0)}]%
    \GraphInit[vstyle=Classic]%
    \graphpreamble
    \SetGraphUnit{1}
    \Vertices{circle}{A,B,C,D,E,F,G,H,I,J}
    \Edges(A,B,C,D,E,F,G,H,I,J,A)
  \end{tikzpicture}
}
\newcommand{\drawCfiveCFI}{%
  \begin{tikzpicture}[scale=0.8,rotate=0, baseline={(0,0)}]%
    \GraphInit[vstyle=Classic]%
    \graphpreamble
    \SetGraphUnit{1}
    \begin{scope}[scale=0.6,shift={(0,1.5)}]
        \Vertices{circle}{A,B,C,D,D2}
        \Edges(A,B,C,D,D2,A)
    \end{scope}
    \begin{scope}[scale=0.6, shift={(1,-1)}]
      \Vertices{circle}{E,F,G,H,H2}
      \Edges(E,F,G,H,H2,E)
    \end{scope}
  \end{tikzpicture}
}
\newcommand{\drawAnchoredCfour}{%
  \adjustbox{}{
  \begin{tikzpicture}[scale=0.7,rotate=0, baseline={(0,-0.15)}]%
    \GraphInit[vstyle=Classic]%
    \graphpreamble
      \SetVertexLabel
    \SetGraphUnit{1}
    \Vertex[x=0, y=0, NoLabel]{A}
    \Vertex[x=0, y=1,L=2, Lpos=180]{B}
    \Vertex[x=1, y=1,L=3]{C}
    \Vertex[x=1, y=0,L=4]{D}
    \Edges(A,B,C,D,A)%
    \node[fill=white, circle, minimum size=0pt, inner sep=-1pt, outer sep=0pt] at (0,0) {$\anchor$};
  \end{tikzpicture}
  }
}
\newcommand{\drawAnchoredCfourB}{%
  \adjustbox{}{
  \begin{tikzpicture}[scale=0.5,rotate=0, baseline={(0,-0.0)}]%
    \GraphInit[vstyle=Classic]%
    \graphpreamble
    \SetGraphUnit{1}
    \Vertex[x=0, y=2]{A}
    \Vertex[x=0, y=1]{B}
    \Vertex[x=0, y=0]{C}
    \Edges(A,B,C)%
    \node[fill=white, circle, minimum size=0pt, inner sep=-1pt, outer sep=0pt] at (0,0) {$\anchor$};
  \end{tikzpicture}
  }
}
\newcommand{\drawAnchoredCfourC}{%
  \adjustbox{}{
  \begin{tikzpicture}[scale=0.5,rotate=0, baseline={(0,-0.0)}]%
    \GraphInit[vstyle=Classic]%
    \graphpreamble
    \SetGraphUnit{1}
    \Vertex[x=0, y=2]{A}
    \Vertex[x=0, y=1]{B}
    \Vertex[x=0, y=0]{C}
    \Edges(A,B,C)%
    \node[fill=white, circle, minimum size=0pt, inner sep=-1pt, outer sep=0pt] at (0,1) {$\anchor$};
  \end{tikzpicture}
  }
}
\newcommand{\drawAnchoredCfourD}{%
  \adjustbox{}{
  \begin{tikzpicture}[scale=0.5,rotate=0, baseline={(0,-0.0)}]%
    \GraphInit[vstyle=Classic]%
    \graphpreamble
    \SetGraphUnit{1}
    \Vertex[x=0, y=1]{B}
    \Vertex[x=0, y=0]{C}
    \Edges(B,C)%
    \node[fill=white, circle, minimum size=0pt, inner sep=-1pt, outer sep=0pt] at (0,0) {$\anchor$};
  \end{tikzpicture}
  }
}
\newcommand{\drawKthreePlusTwo}{%
\adjustbox{}{
  \begin{tikzpicture}[scale=0.21,rotate=0,baseline={(0,0.25)}]%
    \GraphInit[vstyle=Classic]%
    \graphpreamble
    \SetGraphUnit{1}
    \Vertex[x=0, y=2]{A}
    \Vertex[x=1.35, y=2]{B}
    \Vertex[x=0.75, y=1]{C}
    \Vertex[x=2, y=2.5]{D}
    \Vertex[x=2, y=1.2]{E}
    \Edges(A,B,C,A)%
    \Edges(B,E)
    \Edges(B,D)
  \end{tikzpicture}
  }
}
\newcommand{\drawKthreeplusPtwo}{%
  \begin{tikzpicture}[scale=0.16,rotate=-60,baseline={(0,-0.10)}]%
    \GraphInit[vstyle=Classic]%
    \graphpreamble
    \SetGraphUnit{1}
    \Vertex[x=0, y=2]{A}
    \Vertex[x=1.5, y=2]{B}
    \Vertex[x=0.75, y=1]{C}
    \Vertex[x=0.75, y=0]{D}
    \Vertex[x=0.75, y=-1]{E}
    \Edges(A,B,C,A)%
    \Edges(C,D,E)
  \end{tikzpicture}
}
\newcommand{\drawStarthreePlusOne}{%
  \begin{tikzpicture}[scale=0.12, baseline={(0,-0.06)}]%
    \GraphInit[vstyle=Classic]%
    \graphpreamble
    \SetGraphUnit{1}
    \Vertex[x=0, y=0]{A}
    \Vertex[x=1.2, y=-1]{B}
    \Vertex[x=0.0, y=1.3]{C}
    \Vertex[x=-1.2, y=-1]{D}
    \Vertex[x=2.4, y=-1]{E}
    \Edges(A,B,E)%
    \Edges(A,C)%
    \Edges(A,D)%
  \end{tikzpicture}
}
\newcommand{\drawCfourPlusOne}{%
  \begin{tikzpicture}[scale=0.21,rotate=0, baseline={(0,0.05)}]%
    \GraphInit[vstyle=Classic]%
    \graphpreamble
    \SetGraphUnit{1}
    \Vertex[x=0, y=0]{A}
    \Vertex[x=0, y=1]{B}
    \Vertex[x=1, y=1]{C}
    \Vertex[x=1, y=0]{D}
    \Vertex[x=1.75, y=1.75]{E}
    \Edges(A,B,C,D,A)%
    \Edges(C,E)
  \end{tikzpicture}
}
\newcommand{\drawAlmostKfour}{%
  \begin{tikzpicture}[scale=0.24,rotate=0, baseline={(0,0.05)}]%
    \GraphInit[vstyle=Classic]%
    \graphpreamble
    \SetGraphUnit{1}
    \Vertex[x=0, y=0]{A}
    \Vertex[x=0, y=1]{B}
    \Vertex[x=1, y=1]{C}
    \Vertex[x=1, y=0]{D}
    \Edges(A,B,C,D,A)%
    \Edges(A,C)
  \end{tikzpicture}
}\clearpage{}%
\theoremstyle{plain}
\newtheorem{theorem}{Theorem}[section]
\newtheorem{proposition}[theorem]{Proposition}
\newtheorem{lemma}[theorem]{Lemma}
\theoremstyle{definition}
\newtheorem{definition}[theorem]{Definition}
\theoremstyle{remark}
\newtheorem{example}[theorem]{Example}
\icmltitlerunning{Homomorphism Counts for Graph Neural Networks: All About That Basis}
\def\eqref#1{equation~\ref{#1}}
\def\1{\bm{1}}
\def\vh{{\bm{h}}}
\def\vz{{\bm{z}}}
\DeclareMathAlphabet{\mathsfit}{\encodingdefault}{\sfdefault}{m}{sl}
\SetMathAlphabet{\mathsfit}{bold}{\encodingdefault}{\sfdefault}{bx}{n}
\def\sR{{\mathbb{R}}}
\begin{document}

\twocolumn[
\icmltitle{Homomorphism Counts for Graph Neural Networks: All About That Basis}

\icmlsetsymbol{equal}{*}

\begin{icmlauthorlist}
\icmlauthor{Emily Jin}{oxford}
\icmlauthor{Michael Bronstein}{oxford}
\icmlauthor{$\dot{\text{I}}$smail $\dot{\text{I}}$lkan Ceylan}{oxford}
\icmlauthor{Matthias Lanzinger}{vienna,oxford}
\end{icmlauthorlist}

\icmlaffiliation{oxford}{Department of Computer Science, University of Oxford, Oxford, UK}
\icmlaffiliation{vienna}{Institute for Logic and Computation, TU Wien, Vienna, AT}

\icmlcorrespondingauthor{Emily Jin}{emily.jin@cs.ox.ac.uk}

\icmlkeywords{graph neural networks, homomorphism counting, expressivity}

\vskip 0.3in
]

\printAffiliationsAndNotice{}  %

\begin{abstract}
A large body of work has investigated the properties of graph neural networks and identified several limitations, particularly pertaining to their expressive power. Their inability to count certain \emph{patterns} (e.g., cycles) in a graph lies at the heart of such limitations, since many functions to be learned rely on the ability of counting such patterns. Two prominent paradigms aim to address this limitation by enriching the graph features with \emph{subgraph} or \emph{homomorphism} pattern counts. In this work, we show that both of these approaches are sub-optimal in a certain sense and argue for a more \emph{fine-grained} approach, which incorporates the homomorphism counts of \emph{all} structures in the ``basis'' of the target pattern. This yields strictly more expressive architectures without incurring any additional overhead in terms of computational complexity compared to existing approaches. We prove a series of theoretical results on node-level and graph-level \emph{motif parameters} and empirically validate them on standard benchmark datasets. 
\end{abstract}

\section{Introduction}

Graph neural networks (GNNs) \citep{scarselli2008graph,Gori2005} are a class of architectures for learning invariant functions on graphs. Their success in various domains~\citep{Shlomi21,DuvenaudMABHAA15,ZitnikAL18} has led to a number of architectures~\citep{Kipf16,xu18,VelickovicCCRLB18,HamiltonYL17}.  Most of the architectures used in practice fall under the framework of message passing neural networks (MPNNs) from \citet{GilmerSRVD17}, where the core idea is to iteratively update each node's representation based on the messages received from the node's neighbors.

\begin{figure}[!ht]
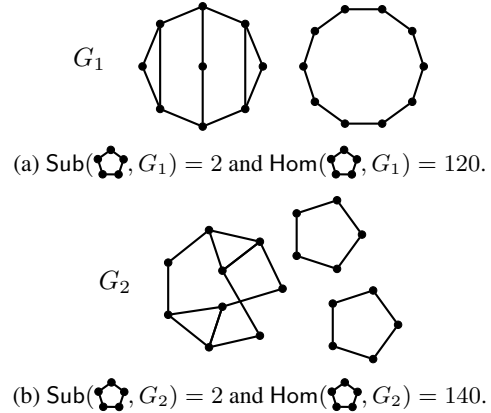

    \centering
    \begin{subfigure}{0.4\textwidth}
    \centering
        $G_1$  
        \quad
        \drawLthreeonetwist
        \quad
        \drawCfiveTwist
    \caption{$\subs(\drawCfive, G_1)=2$ and $\homs(\drawCfive, G_1)=120$.}
    \label{subfig:g1}
    \end{subfigure}
    
    \vspace{0.2cm}
    \begin{subfigure}{0.4\textwidth}
    \centering
        $G_2$
        \quad
       \drawLthreeoneCFI 
       \drawCfiveCFI
    \caption{$\subs(\drawCfive, G_2)=2$ and $\homs(\drawCfive, G_2)=140$.}
    \label{subfig:g2}
    \end{subfigure}
    \caption{Two 1-WL indistinguishable graphs $G_1$ and $G_2$. These graphs have the same number of $5$-cycles, but they can be distinguished by the homomorphism counts of $5$-cycles. $\subs(F,G)$ is the number of times $F$ occurs as a subgraph in $G$, and $\homs(F,G)$ is the number of homomorphisms from $F$ to $G$.}
    \label{fig:subgraphs.not.enough}
    \vspace{-1.5em}
\end{figure}

\textbf{Graph Motif Parameters for Counting Patterns.} The expressive power of MPNNs is upper bounded by the \emph{1-dimensional Weisfeiler Leman graph isomorphism test (1-WL)}~\cite{xu18,MorrisAAAI19}. Hence, all known inexpressiveness results for 1-WL apply to MPNNs. In particular, this implies that MPNNs cannot express a wide variety of \emph{graph motif parameters}, which include functions that count the occurrences of  basic graph patterns, such as paths or cycles. Since these patterns are abundant in real-world domains (e.g., cycles in molecules), a large body of work proposes enhancing the input graphs with such pattern counts. 
One line of work enriches the graph features with \emph{subgraph} counts \cite{BouritsasFZB23,BevilacquaFLSCBBM22,Fra+2022}, while another enriches the graph features with \emph{homomorphism} counts \cite{BarceloGRR21}. Although both approaches increase the expressive power of standard MPNNs, it remains unclear how these approaches compare to one another in terms of expressiveness gained. 

\begin{example}
\label{ex:sub.not.better}
    Consider the graphs $G_1$ and $G_2$ from \Cref{fig:subgraphs.not.enough}, which are indistinguishable by $1$-WL and both contain \emph{two} 5-cycles as \emph{subgraphs}: formally we write $\subs(\drawCfive,G_1)=\subs(\drawCfive,G_2)=2$. Hence, adding subgraph counts for $5$-cycles to 1-WL does not help in distinguishing these graphs, but adding homomorphism counts for $5$-cycles does help in  distinguishing these graphs since $\homs(\drawCfive,G_1)\neq \homs(\drawCfive,G_2)$. $\exend$
\end{example}

This example presents two 1-WL-indistinguishable graphs, where injecting (i.e. enriching with) homomorphism counts is a better strategy than injecting subgraph counts. How about the other direction? Are there graph pairs where injecting subgraph counts is the superior approach? 
\begin{example}
\label{ex:homnotbetter}
Let $H_1$ be the first component of $G_1$ (left of \Cref{subfig:g1}) and $H_2$ be the first component of $G_2$ (left of \Cref{subfig:g2}). It is easy to verify that $H_1$ and $H_2$ cannot be distinguished by 1-WL, and we further observe that:
\[
\homs(\drawCfive,H_1)= \homs(\drawCfive,H_2)=120.
\]
Thus, adding homomorphism counts for \drawCfive does not help in distinguishing these graphs. Conversely, we have $\subs(\drawCfive,H_1)=2$ and $\subs(\drawCfive,H_2)=0$, which suffices to distinguish the graphs. $\exend$
\end{example}

These examples show that these two approaches are incomparable in terms of the expressiveness gain that they yield. This brings us to the main quest of this paper: what information can we inject in GNNs that strictly subsumes both approaches? In this paper, we build on recent advances in graph theory that show that \emph{graph motif parameters}, which encapsulate many functions of interest, can be expressed as a \emph{linear combination of homomorphism counts}.

\begin{example}
\label{ex:linear-comb}
    Consider our running example from \Cref{fig:subgraphs.not.enough}. We can express $\subs(\drawCfive, G)$ as the linear sum: 
    \begin{multline*}
     \frac{1}{10} \homs(\drawCfive, G) -  \frac{1}{2}\homs(\drawKthreeplusone, G) + \frac{1}{2} \homs(\drawKthree, G) 
\end{multline*}
where the terms for \drawCfive, \drawKthreeplusone, and \drawKthree define the \emph{homomorphism basis} of the target function $\subs(\drawCfive, \cdot)$.     $\exend$
\end{example}

In a nutshell, the counts of the terms from the homomorphism basis are the key factors for expressiveness: we can access the homomorphism counts of all ``base terms'' that are \emph{necessary and sufficient} to express the counts of the target pattern. In this sense, the target expressiveness gain is a very precise and a fine-grained one. Let us illustrate on our example how the basis terms are obtained.

\begin{example}
\label{ex:basis}
    Consider the example from \Cref{fig:subgraphs.not.enough}. The graphs in the basis of the target pattern \drawCfive are those that are obtained by contracting non-adjacent vertices:
    \vspace{-0.5em}
    \begin{figure}[h!]
    \centering
    \drawCfiveBigZ \\

    \vspace{0.3em}

    \drawCfiveBigA \quad {\huge $\rightarrow$} \quad \drawKthreeplusoneBigA \\

    \vspace{0.3em}

    \drawCfiveBigB  {\huge \;$=$\!\! } \drawKthreeplusoneBigB  {\huge $\rightarrow$}   \drawKthreeBigB
    \end{figure}
    
    which results in a richer set of patterns in the basis.
    $\exend$
\end{example}

Computing the homomorphism count for all basis terms of a target pattern is strictly more expressive than subgraph and homomorphism counting, as shown in \Cref{fig:overview}. A very appealing aspect of our approach is that it does not incur any additional overhead in terms of computational complexity compared to subgraph counting. In fact, the most efficient way of computing subgraph counts is by computing the linear combination of homomorphism counts.
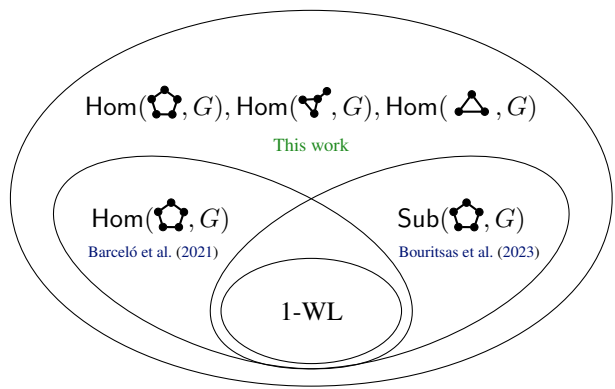
\begin{figure}[t]
\centering
    \begin{tikzpicture}[scale=1.00]
        \draw (0,0) ellipse (12mm and 7mm);
        \node (0,0.5) {1-WL};

    \begin{scope}[rotate=20]
            \draw (1.2,0.25) ellipse (25mm and 12mm);
    \end{scope}
    \node[anchor=south] (0) at (2.0, 0.9) {$\subs(\drawCfive, G)$};
    \node[anchor=south] (0) at (2.1, 0.55) {\tiny\citet{BouritsasFZB23}};

    \begin{scope}[rotate=-20]
            \draw (-1.2,0.25) ellipse (25mm and 12mm);
    \end{scope}
    \node[anchor=south] (0) at (-2.0, 0.9) {$\homs(\drawCfive, G)$};
\node[anchor=south] (0) at (-2.1, 0.55) {\tiny\citet{BarceloGRR21}};

    \draw (0,1.5) ellipse (40mm and 25mm);
    \node[anchor=south] (0) at (-0, 2.4) {$\homs(\drawCfive, G), \homs(\drawKthreeplusone, G), \homs(\drawKthree, G) $};
    \node[anchor=south] (0) at (0, 2) {\scriptsize \color{ForestGreen}This work};
    \end{tikzpicture}
\caption{Expressiveness gain from injecting parameters. All inclusions are proper. All three features require (asymptotically) equivalent effort to calculate; they all can be computed in quadratic time (and no better).}
\label{fig:overview}
\end{figure}

\textbf{Graph Motif Parameters for Beyond Counting Patterns.} Although our exposition primarily focuses on subgraph and homomorphism counts for GNNs, our main objective is to consider the overarching class of graph motif parameters, which has broad implications beyond fixed pattern counting.

\begin{example}
\label{ex:logic}
    We can %
    express whether an atom (i.e., node) in a molecular dataset is part of an alcohol group with the following first-order formula:
    \begin{align*}
       \psi(x,y,z) &:= (\mathsf{bond}(x,y) \land \mathsf{bond}(y,z) \land \mathsf{O}(y) \land \mathsf{H}(z)) 
    \end{align*}
    Similarly, we can 
    express whether an atom is part of an acyl halide functional group as:
    \begin{align*}
        \tau(x,y,z,h) &:= (\mathsf{bond}(x,y) \land \mathsf{dbond}(y,z) \land \mathsf{bond}(y,h) \land \\
        & \mathsf{C}(y) \land \mathsf{O}(z) \land (\mathsf{F}(h) \lor \mathsf{Cl}(h) \lor \mathsf{Br}(h) \lor \mathsf{I}(h)))
    \end{align*}
    where  $\mathsf{O}, \mathsf{H}, \mathsf{C}, \mathsf{F}, \mathsf{Cl}, \mathsf{Br}, \text{and } \mathsf{I}$ represent corresponding elements in the periodic table. We can also disjunctively combine these properties to express that an atom is part of an alcohol or acyl halide functional group:
     \begin{align*}
        \phi(x,y,z,h):= \psi(x,y,z) \lor \tau(x,y,z,h) 
    \end{align*}
    The number of satisfying assignments of $\phi(x,y,z,h)$ is a graph motif parameter that can be expressed as a linear combination of homomorphism counts. This can be done at node-level and for specific variables, e.g., how often variable $x$ is mapped to node $v$ in a satisfying assignment. $\exend$
\end{example}

\Cref{ex:logic} captures a general property that goes beyond fixed pattern counting, yet our method can still precisely identify which homomorphism counts need to be injected in order to lift a model to the desired level of expressivity. By considering the broader framework of graph motif parameters, we provide an exact and principled way of achieving targeted expressivity gains for GNN architectures.

\textbf{Contributions.} 
Our main contributions can be summarized as follows: 
\begin{itemize}[leftmargin=15pt,topsep=4pt]
    \item We study the expressivity of GNNs with respect to injecting graph motif parameters. We show that, in terms of expressiveness, it is highly beneficial to inject the homomorphism basis of a parameter instead of the parameter itself (\Cref{thm.moreexpr}), even for higher-order GNNs. 
    \item This gain in expressiveness comes with additional benefits. In particular, we show how our method can naturally avoid redundant information (\Cref{sec:missing-piece}), and that it applies both on graph- and node-level (\Cref{sec:node.level}). 
   \item 
    Our approach provides a flexible framework for injecting information far beyond the fixed pattern counts considered in previous work, e.g., graphlet counts or logical properties (\Cref{sec:beyond}). 
    \item We empirically validate the theoretical findings and demonstrate efficacy of the approach on a wide variety of benchmarks (\Cref{sec:experiments}). We observe state of the art results in expressivity and significant improvements in real-world regression and link-prediction tasks.
\end{itemize}
From a broader perspective, this work strengthens the connections between recent graph-theoretical results and graph machine learning, informing future work for the targeted design of GNN architectures suitable for a domain of interest.
\section{Related Work}
It is well-known that the expressive power of MPNNs is upper bounded by 1-WL in terms of graph distinguishability~\cite{xu18,MorrisAAAI19}. Models such as graph isomorphism network (GIN)~\cite{xu18} can match this bound, whereas other popular architectures such as graph convolutional networks (GCNs)~\citep{Kipf16}, or graph attention networks (GATs)~\citep{VelickovicCCRLB18} cannot.

This expressiveness limitation has motivated a large body of work in the graph machine learning literature, including the study of  \emph{higher-order} GNN architectures~\cite{MorrisAAAI19,MaronBSL19, MaronFSL19,KerivenP19} which generally match the expressive power of the $k$-WL algorithm for some $k>1$.   Some works consider graphs with \emph{unique node features}~\cite{Loukas20}, and others augment the node features with \emph{random features}~\cite{SatoSDM2020, AbboudCGL21} to achieve higher expressive power. Expressive power of GNNs has also been evaluated in terms of their ability to detect certain graph components, such as biconnected components \cite{Zha+2023}. Building on the classical literature of isomorphism testing and exploiting graph components (and associated graph decompositions) has led to efficient, isomorphism-complete learning algorithms for planar graphs \cite{Dimitrov2023}.

There are different means for studying expressive power, and our work's primary focus is on measuring expressive power based on the ability to capture graph motif parameters. MPNNs are very limited in terms of their expressive power when it comes to counting patterns and so any function of this form goes beyond the ability of these architectures \cite{chen2020can}. This led to a rich line of work of GNN architectures that inject such pattern counts explicitly \cite{BouritsasFZB23,BarceloGRR21,BevilacquaFLSCBBM22,Fra+2022}, some of which are known as subgraph GNNs. \citet{zhang2023complete} also recently provided an expressiveness hierarchy for subgraph GNNs via the so-called subgraph Weisfeiler-Leman tests. 

\citet{papp2022} investigated the expressiveness of specific properties, such as counts for all $k$-vertex substructures. They presented some specific results on the expressiveness gain such as, showing that GNNs with all counts for $k$-vertex substructures can count $k$-cycles, but not $k-1$ cycles. Our framework provides unified answers for these specific expressivity questions.  \citet{BarceloGRR21} presented one of the early works highlighting the importance of using homomorphism counts with GNNs. Very recently, \citet{welke} propose a randomised approach that uses homomorphism counts to distinguish graphs \emph{in expectation}. The work of \citet{zhang2024beyond} is closely related to ours, where it has been shown that the expressiveness of  GNN architectures under consideration can be characterized in terms of homomorphism counts. This further motivates our approach as will be discussed in \Cref{sec:missing-piece}.

Other works have studied homomorphism counts for more traditional machine learning models, such as support vector machines (SVMs) and random forest classifiers \cite{nguyen2020graph, wolf2023structural}. \citet{Grohe2017} also investigated using homomorphism vectors to represent graphs.

\section{Background}
\textbf{Graphs, Homomorphisms, and Invariants.}
A (undirected) graph $G$ is a tuple $(V,E)$ where $V$ is a set of \emph{vertices} and $E \subseteq V \times V$ is a symmetric \emph{edge} relation. In some contexts we write $V(G)$ and $E(G)$ for the vertices and edges, respectively, of graph $G$. We say that $G'$ is a \emph{subgraph} of $G$ if $V(G') \subseteq V(G)$ and $E(G') \subseteq E(G)$. The \emph{induced subgraph} of $G$ by $U \subseteq V(G)$ is the graph $G[U]$ with vertices $U$ and exactly those edges from $G$ where both ends of the edge are in $U$.
Let $\rho$ be a partition of $V(G)$. The quotient graph $G/\rho$ is the graph where the vertices in each block are contracted into a single vertex, i.e., the vertices of $G/\rho$ are the blocks $B \in \rho$, and there is an edge $(B_1, B_2)$ iff there is an edge $(v,u)$ in $G$ for any $v \in B_1, u \in B_2$.

A \emph{homomorphism} from graph $G$ to graph $H$ is a function $h : V(G) \to V(H)$ such that for every $(v,w) \in E(G)$, $(h(v), h(w)) \in E(H)$. An \emph{isomorphism} between graphs $G$ and $H$, denoted $G \isomorphic H$, is a bijection $f : V(G) \to V(H)$ such that $(v,w) \in E(G)$ if and only if $(f(v),f(w)) \in E(H)$, for each $v,w \in V(G)$. 
 We write  $\homs(G,H)$ for the number of homomorphisms from $G$ to $H$, $\subs(G,H)$ for the number of subgraphs $H'$ of $H$ such that $H'\isomorphic G$, and $\indsubs(G, H)$ for the number of sets $U \subseteq V(H)$ such that $H[U] \isomorphic G$, i.e., the number of times $G$ is an (induced) subgraph of $H$. By $\homs(G, \cdot)$, we denote the function that maps graph $H$ to $\homs(G,H)$, and analogously for $\subs$.  For a set of graphs $\mathbb{G}$, we write $\homs(\mathbb{G})$ to represent the set of functions $\{\homs(G, \cdot) \mid G \in \mathbb{G}\}$.

A \emph{graph invariant} $\xi$ is a function on graphs such that for all graphs $G, H$, and all isomorphisms $f$ between $G$ and $H$, we have $\xi(G) = \xi(H)$. A graph invariant \emph{distinguishes} two graphs $G$ and $H$, denoted $G \not\equiv_\xi H$, if  $\xi(G) \neq \xi(H)$, and conversely, we write $G\equiv_\xi H$ if $\xi(G) = \xi(H)$.

\textbf{Graph Neural Networks.}  
Let us consider featured graphs $G=(V, E, \zeta)$ where $\zeta: V(G)\to \sR$, and so $\zeta(u)$ denotes the feature of a node $u \in V$. All notions introduced earlier extend to featured graphs in the usual way. 
We focus on \emph{Message-Passing Neural Networks (MPNNs)}~\citep{GilmerSRVD17} that encapsulate the vast majority of GNNs. 
An MPNN updates the initial node representations $\vh_{v}^{(0)}=\zeta(v)$ of each node $v$ for $0 \leq \ell < L$ iterations based on its own state and the state of its neighbors $\mathcal{N}_v$ as:
\begin{align*}
	\vh_v^{\ell+1} =\update^{(\ell)}  \left(
	\vh_v^\ell, 
	\aggregate^{(\ell)}\left(\vh_v^\ell, \{\!\!\{\vh_u^\ell\mid u\in\mathcal{N}_v\}\!\!\}\right)\right),
\end{align*}
where $\{\!\!\{\cdot\}\!\!\}$ denotes a multiset and $\update^{(\ell)}$ and $\aggregate^{(\ell)}$ are differentiable \emph{update} and \emph{aggregation} functions, respectively. We denote by $d^{(\ell)}$ the dimension of the node embeddings at iteration (layer) $\ell$.  
The final representations $\vh_v^{\left(L\right)}$ of each node $v$ can be used for predicting node-level properties, or they can be pooled to form a graph embedding vector $\vz_G$, which can be used for predicting graph-level properties.  

\textbf{Weisfeiler-Leman Hierarchy.} We will refer to the 1-WL algorithm~\cite{weisfeiler1968reduction}, as well as to the usual $k$-WL hierarchy, which are iterative algorithms that yield graph invariants. We will always refer to the \emph{folklore} version of these algorithms ($k$-WL), rather than the oblivious version (oblivious $k$-WL), where the only difference is in the dimension counts: $k$-WL is equivalent to oblivious $(k+1)$-WL, for every $k>1$ \cite{Grohe2017,GroheOtto15}.

\textbf{Graph Motif Parameters.}
A \emph{graph parameter} is a function that maps graphs into $\mathbb{Q}$.
A \emph{graph motif parameter}~\citep{CurticapeanDM17} is a graph parameter that can be expressed as a finite linear combination of homomorphism counts into $G$. Formally, $\Gamma$ 
is a  graph motif parameter, if 
there are graphs $F_1,\dots,F_\ell$ and a function $\alpha$ mapping them to $\mathbb{Q} \setminus \{0\}$, such that 
for every  graph $G$: 
\begin{equation} \label{lgmp}
    \Gamma(G) \, = \, \sum_{i=1}^\ell \alpha(F_i)\cdot  \homs(F_i, G). 
\end{equation}
We say that a motif parameter $\Gamma$ is \emph{connected} if every graph in $\supp(\Gamma)$ is connected. Informally we also refer to the graphs in the support as the \emph{homomorphism basis} of $\Gamma$. 

For any graph $F$, $\subs(F, \cdot)$ is a graph motif parameter. Its support, commonly called $\spasm(F)$, is the set of all quotient graphs of $F$ (up to isomorphism)~\citep{Lovasz12}. When the host graphs are assumed to be loop-free, the $\spasm$ is implicitly understood to only contain the loop-free quotients.   If $F$ is connected, $\subs(F,\cdot)$ is a connected graph motif parameter.  As an example, recall the quotients of $\drawCfive$ as illustrated in \Cref{ex:basis}%
\footnote{We omit the technical discussion of how precisely the coefficients are obtained; refer instead to \citet{CurticapeanDM17}.}.

\textbf{Expressing a Graph Parameter.}
 We say that a GNN architecture $\gnn$ can \emph{express} a graph parameter $f$ if for every pair of graphs $G,H$, there exists a model parametrization $\gnnmodel$ such that $G \equiv_{\gnnmodel} H$ implies $f(G) = f(H)$.
In other words, there can be no situation where two graphs are equivalent to the model $\gnnmodel$ but $f$ is different on the two graphs. 
We are interested in the question of how the expressiveness gained by different sets of additional features compare to one another. Thus, we extend our notion of expressiveness to the case where $\gnn$ additionally has access to a set of additional feature maps $\extraparams$, i.e., invariant functions that map the input graph to a rational number. 
We say that a GNN architecture $\gnn$ with $\extraparams$ can \emph{express} a graph parameter $f$ if for every pair of graphs $G$, $H$,  there exists a model parametrization $\gnnmodel$ such that if $G \equiv_{\gnnmodel} H$ and $a(G)=a(H)$ for all $a \in \extraparams$, then $f(G)=f(H)$. 
For sets of features $\extraparams, \extraparams'$ and GNN $\gnn$, we say that $\gnn$ with $\extraparams$ is \emph{at least as expressive} as $\gnn$ with $\extraparams'$ if the former expresses all functions that are expressed by the latter. Similarly, $\gnn$ with $\extraparams'$ is \emph{strictly more expressive} than $\gnn$ with $\extraparams$ if the former is at least as expressive and can express strictly more functions than the latter.

\section{All About the Homomorphism Basis}
\label{sec:all}
We study the expressivity of GNNs with respect to graph motif parameters and propose an efficient method for injecting such parameters with provably stronger expressiveness in addition to a variety of further benefits.

\subsection{It Is More Expressive}
\label{sec:more.exp}

Recall that $\extraparams = \{\subs(\drawCfive, \cdot)\}$ and $\extraparams' = \{\homs(\drawCfive, \cdot)\}$ are incomparable if added to 1-WL expressiveness (\Cref{ex:sub.not.better}). 
This also holds in general for arbitrary patterns that have more than one acyclic graph in their basis (follows from \Cref{thm.moreexpr}). 
At the same time, 1-WL with either kind of additional feature can be more expressive than plain 1-WL, 
as previously observed by \citet{BouritsasFZB23} and \citet{BarceloKM0RS20}.
We show that the homomorphism basis of $\subs(\drawCfive, \cdot)$ is more expressive than either approach. %
In fact, we prove a much more general and stronger statement. For (almost) every connected graph motif parameter $\Gamma$, providing the homomorphism basis $\extrabasis_\Gamma$ as additional features is \emph{strictly} more expressive than injecting $\Gamma$.

\begin{theorem}
\label{thm.moreexpr}
Let $\Gamma$ be a connected graph motif parameter and $k \geq 1$. 
    Then $k$-WL with $\extrabasis_\Gamma$ is at least as expressive as $k$-WL with $\{\Gamma\}$.
    Moreover, if at least two functions in $\homs(\mathsf{Supp}(\Gamma))$ cannot be expressed by $k$-WL with $\{\Gamma\}$, then $\gnn$ with $\extrabasis_\Gamma$ is strictly more expressive.\footnote{We prove a significantly stronger statement showing that this applies to many common GNNs rather than just $k$-WL. See \Cref{app:moreexpr} for details.}
\end{theorem}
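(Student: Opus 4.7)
The plan is to derive both claims directly from the definition $\Gamma = \sum_{F \in \supp(\Gamma)} \alpha(F)\,\homs(F, \cdot)$ with $\alpha(F) \neq 0$. For the first statement, I would show that $k$-WL with $\extrabasis_\Gamma$ refines the equivalence induced by $k$-WL with $\{\Gamma\}$: if $G$ and $H$ are indistinguishable by $k$-WL and satisfy $\homs(F,G) = \homs(F,H)$ for every $F \in \supp(\Gamma)$, then by linearity $\Gamma(G) = \Gamma(H)$. Unwinding the expressibility definition, any function expressible by $k$-WL with $\{\Gamma\}$ is therefore also expressible by $k$-WL with $\extrabasis_\Gamma$.

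For strict expressiveness, the key observation is that every basis function $\homs(F, \cdot)$ with $F \in \supp(\Gamma)$ is \emph{trivially} expressible by $k$-WL with $\extrabasis_\Gamma$, since it appears literally as one of the added feature maps. By hypothesis there exists some $F^\star \in \supp(\Gamma)$ for which $\homs(F^\star, \cdot)$ is not expressible by $k$-WL with $\{\Gamma\}$, and this function is then an explicit witness separating the two expressiveness levels. The ``at least two'' threshold in the statement is the natural one: rearranging the linear combination as $\homs(F_j, \cdot) = \alpha(F_j)^{-1}\bigl(\Gamma - \sum_{i \neq j} \alpha(F_i)\,\homs(F_i, \cdot)\bigr)$ shows that if all but one basis function were expressible by $k$-WL with $\{\Gamma\}$, the remaining one would be as well, so ``at least one non-expressible'' automatically forces ``at least two''; the hypothesis simply rules out the degenerate case where the basis adds nothing over $\Gamma$.

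The main obstacle — and the reason the appendix version extends beyond $k$-WL as promised in the footnote — is the quantifier bookkeeping in the definition of expressibility: one must verify that ``a parametrization distinguishes the pair'' composes correctly with ``a feature distinguishes the pair'', so that refining the induced graph equivalence really does translate into expressing at least as many functions. Once this is carefully handled, nothing in the argument is specific to $k$-WL: the same proof transfers to any GNN architecture $\gnn$ whose equivalence relation on graphs is refined in the natural way by added invariant features, which is exactly the content of the generalized statement in the appendix.
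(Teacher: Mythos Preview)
Your argument is correct for the theorem exactly as stated, and Part~1 matches the paper. But your route to Part~2 is substantially different from the paper's, and the difference is revealing.

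You observe that the hypothesis ``$\homs(F^\star,\cdot)$ is not expressible by $k$-WL with $\{\Gamma\}$'' \emph{already} hands you witnessing graphs $G,H$ with $G \equiv_{k\text{-WL}} H$, $\Gamma(G)=\Gamma(H)$, and $\homs(F^\star,G)\neq\homs(F^\star,H)$; strictness follows immediately since $\homs(F^\star,\cdot)\in\extrabasis_\Gamma$. This is valid and much shorter than what the paper does. Notice, however, that your argument never uses that $\Gamma$ is \emph{connected} --- a strong hint that the paper is after something more.

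The paper's proof effectively establishes a stronger statement: it only assumes that two basis functions $\homs(A,\cdot),\homs(B,\cdot)$ are not expressed by the GNN \emph{alone} (without $\Gamma$), and then must \emph{construct} graphs that are simultaneously $\gnn$-equivalent, $\Gamma$-equal, and distinguished by some basis count. That construction is the real work: it uses that $\homs$ is multiplicative over categorical products and, crucially, that $\Gamma$ is additive over disjoint unions (which is where connectedness enters). From witnesses $G_A,H_A,G_B,H_B$ with no control over $\Gamma$, the paper builds graphs of the form $\eta\, G_A^i + \delta_i\, G_B$ whose $\Gamma$-values are forced equal, then derives a contradiction from assuming the basis counts also agree. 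This is what the appendix generalization to ``strongly homomorphism expressive'' GNNs is really about --- not quantifier bookkeeping, but guaranteeing that the class of graphs characterizing $\equiv_\gnn$ is closed under connected components, so that Lemma~A.4 (closure of $\equiv_\gnn$ under $+$ and $\times$) goes through.

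So: your proof is correct and more elementary for the literal statement; the paper's proof buys a genuinely weaker hypothesis (and justifies why ``connected'' is in the statement) at the cost of the algebraic machinery. Your closing paragraph misidentifies where the difficulty lies.
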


We wish to emphasize that the gain in expressiveness observed in \Cref{thm.moreexpr} is not only a matter of narrow theoretical increase. This is illustrated well by subgraph counting. Recall the basis of $\Gamma = \subs(\drawCfive, \cdot)$ from \Cref{ex:basis}. There, $1$-WL with $\extrabasis_\Gamma$ does not only distinguish $\subs(\drawCfive, \cdot)$ but also $\subs(\drawKthreeplusone, \cdot)$ and $\subs(\drawKthree, \cdot)$.
 In general, we observe the following.
\begin{proposition}
\label{freesubgraphs}
    Let $\gnn$ be a GNN, let $G$ be a graph, and let $F \in \spasm(G)$. Then 
    $\subs(F, \cdot)$ can be expressed by $\gnn$ with $\extrabasis_{\subs(G, \cdot)}$.
\end{proposition}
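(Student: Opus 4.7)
The plan is to observe that every homomorphism count needed to express $\subs(F,\cdot)$ is already among the features in $\extrabasis_{\subs(G,\cdot)}$, so any GNN inherits the distinguishing power for $\subs(F,\cdot)$ directly. Unpacking the definition, it suffices to show that whenever two graphs $H_1,H_2$ satisfy $a(H_1)=a(H_2)$ for every $a \in \extrabasis_{\subs(G,\cdot)}$, one has $\subs(F,H_1)=\subs(F,H_2)$. The GNN-equivalence hypothesis in the definition of expressibility then plays no additional role, so any parametrization $\gnnmodel$ witnesses expressibility.

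By the motif-parameter decomposition for subgraph counts, $\subs(F,H)=\sum_{F' \in \spasm(F)} \alpha(F')\, \homs(F',H)$, with $\supp(\subs(F,\cdot))=\spasm(F)$. The proof therefore reduces to the set-theoretic inclusion $\spasm(F) \subseteq \spasm(G)$: once this is granted, each $\homs(F',\cdot)$ appearing on the right-hand side lies in $\extrabasis_{\subs(G,\cdot)}$, and feature-level agreement across $\extrabasis_{\subs(G,\cdot)}$ forces $\subs(F,H_1)=\subs(F,H_2)$.

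The heart of the argument is thus the transitivity of the quotient relation, which I would establish by composing partitions. Since $F \in \spasm(G)$, there is a partition $\rho$ of $V(G)$ with $G/\rho \isomorphic F$. Given $F' \in \spasm(F)$, there is a partition $\rho'$ of $V(F)$ with $F/\rho' \isomorphic F'$. Pulling $\rho'$ back through the canonical surjection $V(G) \to V(G)/\rho = V(F)$ yields a coarser partition $\rho''$ of $V(G)$, whose blocks are unions of blocks of $\rho$. Unwinding the quotient construction then gives $G/\rho'' \isomorphic F/\rho' \isomorphic F'$, and the loop-free constraint transfers because loops in $G/\rho''$ correspond exactly to loops in $F/\rho'$. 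Hence $F' \in \spasm(G)$.

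I do not foresee a real obstacle: the decomposition into the homomorphism basis is the standard fact underlying the paper's framework, and the composition-of-quotients step is elementary. The only subtle bookkeeping is to ensure that when the host graphs are assumed loop-free, the loop-free quotients on each side correspond correctly under $\rho''$, but this is immediate once the pulled-back partition is written down.
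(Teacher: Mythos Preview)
Your proposal is correct and follows essentially the same approach as the paper: reduce expressibility to the inclusion $\spasm(F)\subseteq\spasm(G)$ and prove that inclusion by composing the two quotient partitions. Your write-up is slightly more explicit about unpacking the definition of expressibility and about the loop-free bookkeeping, but the core argument is identical.
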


\subsection{It Is Computationally Efficient}
\label{sec:efficient}
In practice, the additional features given by $\extraparams$ to a GNN must be computed externally for each input graph. \citet{CurticapeanDM17} have shown that, in slightly simplified terms, the most efficient way to compute $\Gamma(G)$ is to compute the terms of the homomorphism basis and then obtain $\Gamma(G)$ via \Cref{lgmp}.
Hence, providing a GNN with $\extrabasis_\Gamma$ is not just more expressive but also computationally no more effort than providing $\Gamma$ itself.
The precise technical statements are intricate as they require tools from parameterized complexity theory; we refer the interested reader to \Cref{app:complexity} for formal details.

Popular subgraph counting implementations often exhibit significantly worse complexity than required to compute $\homs(\supp(\Gamma))$. Typically, to compute $\subs(F, G)$ they have running times exponential in $|V(G)|$ (cf., \citet{RibeiroPSAS21}). Earlier works on injecting structural information in GNNs focus on particular settings where this worst-case behavior is not observed: sparse input graphs such as molecules and special patterns like cycles allow for good ad hoc algorithms for such graphs. For slightly more complex graphs or patterns, direct computation of $\subs$ becomes intractable and computation via the homomorphism basis  becomes much more efficient also in practice.

\subsection{It Avoids Redundancy}
\label{sec:missing-piece}
While expressiveness is a natural criterion to consider with respect to injecting additional features, we also want to avoid redundant information. In this regard, the homomorphism basis is ideally suited as the expressiveness of GNNs with respect to homomorphism counts is well-understood. In practice, this means that instead of providing all of $\extrabasis_\Gamma$ as additional features to distinguish some target function $\Gamma$, we can often precisely determine which functions in $\extrabasis_\Gamma$ are not already distinguished by the GNN and only inject those.

Recent results provide a clear picture of when a GNN can distinguish a function $\homs(F, \cdot)$. \citet{neuen2023homomorphism} recently showed that ($k-1$)-WL cannot distinguish any function $\homs(F,\cdot)$ when $k$ is the \emph{treewidth}
of $F$. A classic result by \citet{Dvorak10} states that $k$-WL is enough to distinguish all such functions. \citet{geerts2022expressiveness} showed an upper bound by $k$-WL for pattern counting based on the treewidth of patterns in the \spasm. This has significant practical implications: for architectures such as GIN~\citep{xu18} that reach 1-WL expressiveness, injecting only the cyclic graphs of $\extrabasis_\Gamma$ gives equivalent expressiveness to injecting all of $\extrabasis_\Gamma$.
On the other hand, architectures like GAT~\cite{VelickovicCCRLB18} and GCN~\cite{Kipf16} are weaker than 1-WL and thus motivate providing all of $\extrabasis_\Gamma$ as additional features to the network.
Recently, \citet{zhang2024beyond} gave precise definitions for the sets of functions $\homs(F,\cdot)$ that can (and cannot) be expressed by Local $k$-GNNs, and Subgraph GNNs, thus showing precisely what part of $\extrabasis_\Gamma$ can already be expressed by the corresponding GNN architectures.

Furthermore, it is never necessary to consider homomorphism counts from disconnected graphs. Note that this is not trivial but particular to homomorphism counting, e.g., no similar simplification can be made for subgraph counting. See \Cref{sec:beyond} below for an application of this observation.

\begin{proposition}
\label{onlycon}
    Let $\gnn$ be a GNN, let $\mathbb{G}$ be a set of graphs and let $\mathsf{CC}(\mathbb{G})$ be the set of the (maximal) connected components in $\mathbb{G}$. Then $\gnn$ with $\homs(\mathsf{CC}(\mathbb{G}))$ is at least as expressive as $\gnn$ with $\homs(\mathbb{G})$.
\end{proposition}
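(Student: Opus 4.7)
The plan is to reduce the statement to the classical multiplicativity of homomorphism counts over disjoint unions: for any graph $F$ with connected components $C_1,\ldots,C_m$ and any host graph $H$, a homomorphism $F \to H$ is determined by, and can be chosen independently on, each component, which yields
\[
\homs(F,H) \;=\; \prod_{i=1}^{m} \homs(C_i, H).
\]

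First, I would use this identity to show that the equivalence relation on graphs induced by agreement on $\homs(\mathsf{CC}(\mathbb{G}))$ refines the one induced by agreement on $\homs(\mathbb{G})$. Concretely, if two graphs $G_1,G_2$ satisfy $\homs(C,G_1)=\homs(C,G_2)$ for every $C \in \mathsf{CC}(\mathbb{G})$, then for every $F \in \mathbb{G}$ with connected components $C_1,\ldots,C_m$ we have $C_i \in \mathsf{CC}(\mathbb{G})$ by definition, and the multiplicativity identity at once gives $\homs(F,G_1) = \homs(F,G_2)$.

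Next, I would invoke the definition of ``at least as expressive.'' Assume a graph parameter $f$ is expressible by $\gnn$ with $\homs(\mathbb{G})$, so for every pair $G_1,G_2$ there is a parametrization $\gnnmodel$ such that $G_1 \equiv_{\gnnmodel} G_2$ together with agreement on all features in $\homs(\mathbb{G})$ forces $f(G_1)=f(G_2)$. I would then reuse \emph{the same} $\gnnmodel$ as a witness for $\gnn$ with $\homs(\mathsf{CC}(\mathbb{G}))$: by the previous step, the hypothesis of agreement on $\homs(\mathsf{CC}(\mathbb{G}))$ implies agreement on $\homs(\mathbb{G})$, so the same conclusion $f(G_1)=f(G_2)$ goes through.

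There is no real obstacle; the only mild subtlety is matching the quantifier structure of the expressibility definition (where $\gnnmodel$ is allowed to depend on the graph pair), but this is painless because we simply reuse whichever parametrization the hypothesis supplies. Note that the argument is genuinely particular to homomorphism counts: the analogous statement fails for $\subs$ or $\indsubs$, which is precisely because neither is multiplicative over disjoint unions.
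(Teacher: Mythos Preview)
Your proposal is correct and follows essentially the same route as the paper: the paper derives \Cref{onlycon} directly from the multiplicativity identity $\homs(F+G,H)=\homs(F,H)\homs(G,H)$ (the first item of \Cref{lem:hombasics}), observing that the homomorphism count from any disconnected graph is determined by the counts from its connected components. Your write-up simply unpacks the expressibility definition more carefully than the paper's one-line remark, but the underlying argument is the same.
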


\subsection{It Works for Node-level Subgraph Information}
\label{sec:node.level}
Previous work has shown the efficacy of providing subgraph counts at node-level. We will use the term \emph{anchored graph} to mean a graph $G$ with a marked vertex $\anchor$. When $G$ is anchored, we write $\subs(G,H,v)$ for the number of subgraphs $H'$ of $H$ where $G \isomorphic H'$ such that the marked vertex maps to $v$ in the isomorphism.
In particular, 
\citet{BouritsasFZB23} propose labeling each node $v \in H$ with $\subs(G',H,v)$ for every way $G'$ of marking a graph $G$.

Local homomorphism counts of the graphs in the \spasm are not enough to determine local subgraph counts (see \Cref{app:nodelevel}). However, in the following we show how the framework we have used so far naturally extends to this setting.
For an anchored graph $G$, the anchor of quotient $G/\rho$ is the vertex for the block $B \in \rho$  that contains the anchor of $G$. Analogously to the standard case, $\spasm^\anchor(G)$ is the set of all (loop-free) quotient graphs of $G$. 

\begin{example}
The $\spasm(\drawCfour)$ contains $\drawCfour$, $\drawPtwo$, and $\drawPone$, whereas $\spasm^\anchor(\drawCfour)$ consists of the four graphs:
    \begin{figure}[H]
    \centering
    \drawAnchoredCfour
    \quad
    \drawAnchoredCfourB
    \quad
    \drawAnchoredCfourC
    \quad
    \drawAnchoredCfourD
\end{figure}\vspace{-0.4cm}
In $\spasm^\anchor$, there are two anchored versions of $\drawPtwo$. The quotient that identifies vertices $2$ and $4$ has the anchor at the end of the path, while the quotient that identifies the $\anchor$ and $3$ has its anchor on the middle vertex. $\exend$
\end{example}

With this definition of $\spasm^\anchor$ we can now show that homomorphism counts are indeed also suited to node-level tasks and the previous arguments apply similarly in this setting.
\begin{theorem}
\label{thm:local}
Let $G$ be an anchored graph and let $H$ be a graph and $v \in V(H)$. Then
\[
\subs(G, H,v) = \sum_{F \in \spasm^\anchor(G)} \alpha_F \cdot \homs(F, H)[\anchor \mapsto v]
\]
where $\homs(F, H)[\anchor \mapsto v]$ is the number of homomorphisms that map the anchor of $F$ to $v \in V(H)$.
\end{theorem}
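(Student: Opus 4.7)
The plan is to mimic the classical Curticapean--Dell--Marx proof of the homomorphism-basis identity for $\subs$, carrying the constraint ``$\anchor \mapsto v$'' as a passive label through every step. First I would derive an anchored version of the standard $\homs = \sum \injs$ identity over the partition lattice of $V(G)$, then invert it by Möbius on that lattice, and finally divide by the order of the anchor-fixing automorphism group to turn injective counts into subgraph counts.

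\emph{Step 1 (kernel decomposition).} For any anchored graph $F$, partition the homomorphisms $h\colon F \to H$ with $h(\anchor) = v$ according to the partition $\rho_h$ of $V(F)$ induced by $h^{-1}$. Since $H$ is loop-free, $\rho_h$ contains no edge inside a block, so $F/\rho_h$ is a loop-free anchored quotient (with anchor the block containing $\anchor$). Each such $h$ factors uniquely as $F \twoheadrightarrow F/\rho_h \hookrightarrow H$, where the second map is an anchor-preserving injective homomorphism. Grouping by induced partition yields
\[
\homs(F, H)[\anchor \mapsto v] \;=\; \sum_{\rho} \injs(F/\rho, H)[\anchor \mapsto v],
\]
summing over partitions of $V(F)$ with loop-free quotient.

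\emph{Step 2 (Möbius inversion).} Applying Step 1 to every quotient of $G$ produces a triangular linear system indexed by the partition lattice of $V(G)$. The usual Möbius inversion on this lattice is unaffected by the anchor label, since all arrows involved (quotient maps and injective homomorphisms) are anchor-preserving by construction. This yields rational coefficients $\mu_\rho$ with
\[
\injs(G, H)[\anchor \mapsto v] \;=\; \sum_{\rho} \mu_\rho \cdot \homs(G/\rho, H)[\anchor \mapsto v].
\]

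\emph{Step 3 (subgraphs vs.\ injective homomorphisms) and conclusion.} The group $\mathrm{Aut}^\anchor(G)$ of automorphisms of $G$ fixing $\anchor$ acts freely on the set of anchor-preserving injective homomorphisms $G \to H$ landing in a given subgraph, so each subgraph $H' \subseteq H$ with $H' \isomorphic G$ as anchored graphs (sending $\anchor$ to $v$) is counted exactly $|\mathrm{Aut}^\anchor(G)|$ times by $\injs(G, H)[\anchor \mapsto v]$. Dividing and then grouping the partitions in Step 2 by the anchored isomorphism type of their quotient gives the claimed identity with rationals $\alpha_F$ indexed by $F \in \spasm^\anchor(G)$.

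\emph{Main obstacle.} The only delicate point is the final grouping in Step 3: one must verify that two partitions whose quotients are isomorphic as \emph{anchored} graphs combine into a single coefficient depending only on the anchored isomorphism class, not on the chosen partition. This is the anchored refinement of the standard bookkeeping argument and works precisely because, throughout the construction, every morphism (quotient, injection, automorphism) preserves the anchor, so the orbits of the $\mathrm{Aut}^\anchor(G/\rho)$-action on partitions with a given quotient isomorphism type have the expected size.
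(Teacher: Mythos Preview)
Your proposal is correct and follows essentially the same route as the paper: establish the anchored $\homs = \sum_\rho \injs$ identity over the partition lattice, apply M\"obius inversion to solve for $\injs(G,H)[\anchor\mapsto v]$ in terms of anchored homomorphism counts, and divide by $|\mathrm{Aut}^\anchor(G)|$ to pass from injective homomorphisms to subgraphs. The paper additionally writes out the explicit M\"obius coefficient $\alpha_F$, but otherwise the structure and the key observations (including that the anchor is carried passively through each step) match yours.
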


It follows that the arguments of the previous sections apply analogously also to this setting. With respect to \Cref{sec:missing-piece} particularly,
\citet{LanzingerPB24} showed that $\homs(F, H)[\anchor \mapsto v]$ can be computed from the node-level $k$-WL labeling for every graph $F$ of treewidth $k$. Hence, in our example above, node-level homomorphism counts of $\drawCfour$ to 1-WL are sufficient additional information to determine node-level subgraph counts of the four cycle.

\subsection{It Goes Beyond Pattern Counting}
\label{sec:beyond}
We have mainly focused on the setting where a GNN is provided additional information motivated by some target structure that is considered important to the task (e.g., cycles).
However, the framework of graph motif parameters also motivates a natural alternative approach that is not dependent on such structure selection. 

For some (decidable) graph property $\phi$, let $\indsubs^\phi_k(G)$ be the function that maps $G$ to the number of induced subgraphs in $G$ with $k$ vertices that satisfy $\phi$. Through the choice of $\phi$  -- e.g., connectedness, hamiltonicity, planarity -- $\indsubs^\phi_k$ can cover an immense variety of graph parameters. For example,  $\indsubs(F,G) = \indsubs^\psi_{|V(F)|}(F)$ where $\psi$ is the property of being isomorphic to $F$ and  when $\phi$ is connectedness, $\indsubs^\phi_k(G)$ is the number of $k$-graphlets in $G$, a popular metric in graph analysis~(see \citet{RibeiroPSAS21}). 
It is known that all such functions $\indsubs^\phi_k$ are indeed graph motif parameters~\citep{RothS20}.

For some settings where there is no obvious kind of structure to inject information for, this presents an attractive alternative. In fact it is possible to provide practical amounts of additional information to a GNN, such that it can distinguish \emph{all} functions $\indsubs^\phi_k$ for small $k$.
 In concrete terms, by inspection of the arguments by~\citet{RothS20} in combination with \Cref{onlycon}, we observe the following.

\begin{proposition}
\label{prop:indsubs}
    Let $\gnn$ be a GNN, let $\phi$ be a decidable graph property, and let $k \geq 1$. 
    Let $\graphs^{\sf con}_{\leq k}$ be the set of connected graphs with at most $k$ vertices.
    Then $\indsubs^\phi_k$ can be expressed by $\gnn$ with $\homs(\graphs^{\mathit{con}}_{\leq k})$.
\end{proposition}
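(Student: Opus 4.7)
The plan is to combine the result of \citet{RothS20}, which establishes that $\indsubs^\phi_k$ is a graph motif parameter with a specific bound on its support, with \Cref{onlycon}, which allows us to restrict attention to connected graphs only. Together these two ingredients will give us the precise set $\graphs^{\sf con}_{\leq k}$ claimed in the statement.

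First I would invoke \citet{RothS20} to write
\[
\indsubs^\phi_k(G) \;=\; \sum_{i=1}^{\ell} \alpha_i \cdot \homs(F_i, G)
\]
for some fixed graphs $F_1, \ldots, F_\ell$ and rational coefficients $\alpha_i$. The key quantitative point I need from this result is that every $F_i$ has at most $k$ vertices; this is natural since $\indsubs^\phi_k$ depends only on the structure of $k$-vertex induced subgraphs, and it is exactly how their construction produces the basis. Letting $\mathbb{G} = \{F_1, \ldots, F_\ell\}$, it is immediate from the equation above that $\gnn$ with $\homs(\mathbb{G})$ as additional features expresses $\indsubs^\phi_k$: if two graphs $G,H$ satisfy $G \equiv_{\gnnmodel} H$ and agree on every $\homs(F_i, \cdot)$, then they agree on any fixed linear combination of these counts, hence $\indsubs^\phi_k(G) = \indsubs^\phi_k(H)$.

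Next, I would apply \Cref{onlycon} to reduce from $\mathbb{G}$ to $\mathsf{CC}(\mathbb{G})$, the set of maximal connected components appearing in graphs from $\mathbb{G}$. Since every $F_i \in \mathbb{G}$ has at most $k$ vertices, each of its connected components has at most $k$ vertices as well, and is connected by construction, so
\[
\mathsf{CC}(\mathbb{G}) \;\subseteq\; \graphs^{\sf con}_{\leq k}.
\]
By \Cref{onlycon}, $\gnn$ with $\homs(\mathsf{CC}(\mathbb{G}))$ is at least as expressive as $\gnn$ with $\homs(\mathbb{G})$, and therefore still expresses $\indsubs^\phi_k$. Finally, enlarging the feature set from $\homs(\mathsf{CC}(\mathbb{G}))$ to the superset $\homs(\graphs^{\sf con}_{\leq k})$ can only preserve or increase expressiveness: the additional features merely refine the equivalence relation on graphs, so any function already expressible remains expressible. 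This concludes the argument.

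The step I expect to be most delicate is the invocation of \citet{RothS20}: one must check that their construction indeed yields a homomorphism basis whose graphs are all of size at most $k$, rather than some arbitrary larger graphs. Once this is in hand, the remainder is a clean composition of \Cref{onlycon} and the monotonicity of expressiveness under enlarging the set of injected features.
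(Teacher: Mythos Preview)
Your proposal is correct and follows essentially the same route as the paper: invoke \citet{RothS20} to bound the support of $\indsubs^\phi_k$ by graphs on at most $k$ vertices, then apply \Cref{onlycon} to pass to connected components, landing in $\graphs^{\sf con}_{\leq k}$. Your write-up is more explicit than the paper's about the monotonicity step (enlarging the feature set preserves expressibility), but the argument is the same.
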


The possible applications of our approach range even further. Counting the satisfying assignments for certain fragments of first-order logic in a graph is also a graph motif parameter \citep{DellRW19}. This presents the possibility of targetedly injecting information to express  logic-specified properties.

\section{Experiments}
\label{sec:experiments}
We empirically validate our results on a variety of real-world tasks and architectures. Specifically, we conduct experiments for molecular property prediction with ZINC (\Cref{sec:gr-zinc}) and QM9 (\Cref{sec:qm9}), and for link prediction with COLLAB (\Cref{sec:collab}). We complement our empirical analysis with a dedicated expressiveness experiment on the BREC benchmark~(\Cref{sec:brec}). Full experimental details
are reported in \Cref{app:exp-details}.

\subsection{Graph Regression Experiments on ZINC}
\label{sec:gr-zinc}

\begin{table*}
\caption{We report the mean absolute error (MAE) for graph regression on the ZINC-12K dataset (without edge features). Using the $\spasm$ and $\spasm^{\anchor}$ counts yields very substantial improvements in performance of every model compared to their respective baselines and other approaches using $C_3,...,C_8$.}
\label{tab:basis-v-homiso.column}
\begin{center}
\begin{small}
\begin{sc}
\begin{tabular}{lcccc} 
\toprule
& GAT   & GCN  & GIN & BasePlanE \\ 
\midrule
Base    & 0.457\vartxtm{0.004}  & 0.417\vartxtm{0.007}  & 0.294\vartxtm{0.012}  & 0.124\vartxtm{0.004}  \\ 
\midrule
$\subs$     & 0.210\vartxtm{0.006}  & 0.206\vartxtm{0.006}  & 0.147\vartxtm{0.006}  & 0.108\vartxtm{0.002}\\ 

$\homs$     & 0.269\vartxtm{0.033}  & 0.254\vartxtm{0.017}  & 0.208\vartxtm{0.025}  & 0.106\vartxtm{0.004}\\ 
\midrule
$\spasm$& 0.155\vartxtm{0.006}  & 0.166\vartxtm{0.003}  & 0.158\vartxtm{0.004}  & 0.104\vartxtm{0.005}\\ 

$\spasm^\anchor$    & \textbf{0.147}\vartxtm{0.004} & \textbf{0.165}\vartxtm{0.004}& \textbf{0.146}\vartxtm{0.005}  & 0.100\vartxtm{0.002}  \\
\bottomrule
\end{tabular}
\end{sc}
\end{small}
\end{center}
\end{table*} 
\paragraph{Experimental Setup.} ZINC \citep{irwin2012zinc, dwivedi2023benchmarking} is a graph regression task for predicting the constrained solubility of molecules. Following \citet{dwivedi2023benchmarking}, we use a subset of the dataset that contains 12,000 graphs. Each graph represents an individual molecule, and the original node features denote the atom type of a given node.
We select GAT, GCN, and GIN, using the same experimental protocol and hyperparameters from \citet{dwivedi2023benchmarking}. We also select BasePlanE \cite{Dimitrov2023}, a recent architecture that is complete over planar graphs. For all models, we do not perform any additional model tuning. 

Our experiment compares the following configurations:
\begin{itemize}[leftmargin=20pt, noitemsep,topsep=0pt]
\item {\sc Base}: The base performance of the GAT, GCN, GIN, and BasePlanE model architectures.
\item {\subs}: The baseline models enriched with \emph{subgraph} counts of cycles $\{C_3,...,C_8 \}$.
\item {\homs}: The baseline models enriched with \emph{homomorphism} counts of cycles $\{C_3,...,C_8 \}$.
\item {\spasm}: The baseline models enriched with homomorphism counts of $\spasm(C_7) \cup \spasm(C_8)$\footnote{Note that $\spasm(C_7) \cup \spasm(C_8)$ also contain the basis graphs for $C_3,C_4,C_5$, and $C_6$ (recall \Cref{freesubgraphs}).}.
\item {$\spasm^\anchor$}: The baseline models enriched with homomorphism counts of $\spasm^\anchor(C_7) \cup \spasm^\anchor(C_8)$.
\end{itemize}
In all cases we inject the counts at node level, making $\subs$ and $\homs$ correspond directly to the approaches of \citet{BouritsasFZB23} and \citet{BarceloGRR21}, respectively. This allows us to precisely compare the effect of the homomorphism basis to these previous works. %
For $\spasm$ and $\spasm^{\anchor}$, we can obtain the global and local subgraph counts, respectively, so we include those counts as well. 
In all cases, the raw counts are encoded using a 2-layer MLP. Following the analysis of \Cref{sec:missing-piece}, we exclude counts for the acyclic graphs of the basis for GIN and BasePlanE experiments but add them for GAT and GCN.

\textbf{Results.} 
Our results are summarized in \Cref{tab:basis-v-homiso.column}.
In all models, adding the $\spasm^{\anchor}$ counts as additional node features yields the best performance. Despite ZINC being a graph-level task, we observe that precise node-level subgraph information is critical for top performance as $\spasm^\anchor$ outperforms $\spasm$. Additionally, 
in GAT and GCN, we clearly see the improvements in expressiveness from \Cref{thm.moreexpr} in effect as \spasm already clearly outperforms \textsc{Sub}.
This is further confirmed by the effectively equivalent performance of GAT/$\spasm^\anchor$, GIN/$\spasm^\anchor$, and GIN/\textsc{Sub}. That is, their performance is intuitively the performance of 1-WL together with node-level subgraph counts (up to $C_8$). Any additional expressiveness provided by $\spasm^\anchor$ is seemingly not relevant to the ZINC task. Since GIN is as expressive as $1$-WL, little is gained, whereas GAT manages to match GIN  through the increased expressiveness from the homomorphism basis.
Moreover, the gap between $\spasm$ and $\spasm^\anchor$  illustrates the discussion in \Cref{sec:node.level} and in particular the effect of $\spasm^\anchor$ as predicted in \Cref{thm:local}.

For BasePlanE, we observe that using the homomorphism basis instead of the parameter itself results in improved performance. This is insightful: even though BasePlanE can distinguish planar graphs (and all ZINC graphs are planar), distinguishing graphs is a weaker notion than capturing graph motif parameters, so injecting these parameters leads to empirical improvements even for stronger models specifically designed for planar graphs ( see also \Cref{app:zinc}).

\textbf{Using the Entire Basis.} 
To study the effect of the homomorphism basis in more detail, we perform experiments where we inject increasingly more (in order of the number of vertices, lowest  first) elements of the basis as features. To avoid the effect of the complex interplay of odd/even cycles on the benchmark, we use only $\spasm(C_8)$. \Cref{fig:spasm-gradient} clearly shows that the performance of every model improves as more of the basis is included. Moreover, just as in \Cref{tab:basis-v-homiso.column}, GAT and GIN converge in performance, again confirming that their expressiveness (as relevant to this task)  matches with this additional information.
\begin{figure}[hb]
	\centering
	\includegraphics[width=0.8\linewidth]{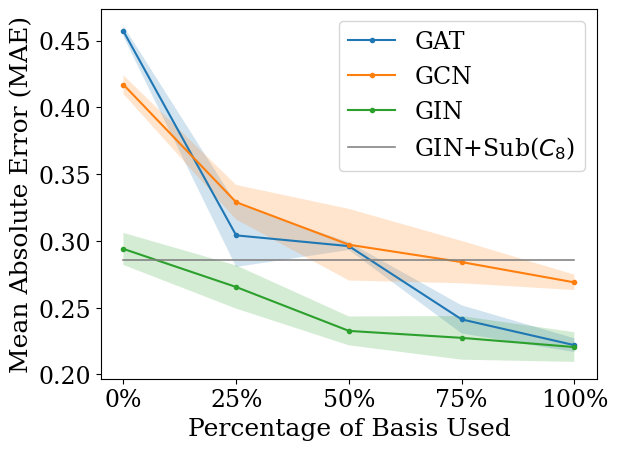}
	\caption{We report the effect of using incrementally more of $\spasm(C_8)$  as node features for each model. The performance of GIN with $\subs(C_8, \cdot)$ at node-level is also included for reference.}
	\label{fig:spasm-gradient}
\end{figure}

\subsection{Graph Regression Experiments on QM9}
\label{sec:qm9}

\textbf{Experimental Setup.}
QM9 is real-world molecular dataset consisting of 130,000 graphs \citep{wu2018moleculenet, Brockschmidt20} for graph regression over 13 different target properties. We select R-GCN \cite{schlichtkrull2018modeling} as our base model due to its comparatively stable behavior in the  original benchmark \citep{Brockschmidt20}. %
Contrasting \Cref{sec:gr-zinc}, we inject only graph-level homomorphism counts (rather than node-level). The additional features are injected only once as additional inputs to the final MLP. The regression tasks in QM9 target a variety of quantum chemical properties of the molecule and it is unclear which specific pattern information might be helpful. We thus follow the ideas from \Cref{sec:beyond} and inject counts for $\graphs^{\sf con}_{\leq 5}$ (connected graphs with at most $5$ vertices) and the 6-cycle $C_6$. This set contains all of $\spasm(C_6)$. Baseline results for R-GCN are from \citet{Brockschmidt20}.  %

\begin{table}[t]
\caption{We report the mean absolute error (MAE) for graph regression on QM9 for all 13 proprieties. Using the homomorphism counts yields significant improvements over the baseline model.}
\label{tab:qm9.smol}
\begin{center}
\begin{small}
\begin{sc}
\begin{tabular}{lrr} 
\toprule
Property & R-GCN                & + $\homs(\graphs^{\sf con}_{\leq 5} \cup \{ C_6\})$    \\ 
\midrule
mu       & 3.21 \vartxtm{0.06}  & \textbf{2.29 \vartxtm{0.03}}       \\
alpha    & 4.22 \vartxtm{0.45}  & \textbf{1.77 \vartxtm{0.05}}       \\
HOMO     & 1.45 \vartxtm{0.01}  & \textbf{1.30 \vartxtm{0.03}}       \\
LUMO     & 1.62 \vartxtm{0.04}  & \textbf{1.41 \vartxtm{0.02}}       \\
gap      & 2.42 \vartxtm{0.14}  & \textbf{2.00 \vartxtm{0.04}}      \\
R2       & 16.38 \vartxtm{0.49} & \textbf{10.29 \vartxtm{0.35}}      \\
ZPVE     & 17.40 \vartxtm{3.56} & \textbf{3.03 \vartxtm{0.38}}      \\
U0       & 7.82 \vartxtm{0.80}  & \textbf{1.09 \vartxtm{0.18}}       \\
U        & 8.24 \vartxtm{1.25}  & \textbf{1.21 \vartxtm{0.17}}       \\
H        & 9.05 \vartxtm{1.21}  & \textbf{1.22 \vartxtm{0.14}}       \\
G        & 7.00 \vartxtm{1.51}  & \textbf{1.14 \vartxtm{0.13}}      \\
Cv       & 3.93 \vartxtm{0.48}  & \textbf{1.46 \vartxtm{0.08}}       \\
Omega    & 1.02 \vartxtm{0.05}  & \textbf{0.81 \vartxtm{0.02}}       \\
\bottomrule
\end{tabular}
\end{sc}
\end{small}
\end{center}
\vskip -0.3in
\end{table}

\textbf{Results.}
Table \ref{tab:qm9.smol} shows that using the homomorphism counts for $ \graphs^{\sf con}_{\leq 5}$ and $C_6$ significantly improves the performance of the base R-GCN model on all properties, even without additional model tuning. 
\citet{Alon-ICLR21} showed that the performance of R-GCN on this task can also be boosted with a fully-adjacent (FA) layer. We report performance increases also in combination with FA in \Cref{app:qm9} (see~\Cref{tab:qm9}).
These observations demonstrate the well-foundedness of our approach as it confirms the practical relevance of \Cref{sec:beyond} and \Cref{prop:indsubs}.
Moreover, we see that the addition of graph level homomorphism counts alone are sufficient for substantial performance improvements. This is particularly noteworthy as it simplifies integration with more complex models, where node-level injection of many features might affect hyperparameters significantly. To the best of our knowledge, this is also the first study demonstrating the potential of injecting graph-level homomorphism information to a GNN.

\subsection{Link Prediction on COLLAB}
\label{sec:collab}
So far, we have demonstrated multiple ways in which our approach is effective on graph-level regression tasks on small graphs. In this experiment, we contrast this by performing a link prediction task on a graph where direct subgraph counting is intractable for most patterns.

\textbf{Experimental Setup.}
We evaluate the link prediction task for the COLLAB dataset from Open Graph Benchmark \citep{OGB-NeurIPS2020}. The dataset contains a single undirected graph with over 235,000 nodes  -- each with 128 features -- and roughly 1.3 million edges, which represents a collaboration network. Previous works \citep{BarceloGRR21} enriched the node features by the number of cliques (i.e., $K_3, K_4, K_5$) the node is part of. We select GAT and GCN as our baseline models and enrich their node features with the node-level homomorphism counts of the $\spasm$ of different-sized cliques and paths. The \spasm of a clique contains only itself, and the clique experiments thus match the experiments conducted by \citet{BarceloGRR21}. 
To control for high counts values that occur in large graphs, we encode them via positional encoding  \cite{vaswani2017attention}. We provide a more in-depth analysis of the importance of using the positional encoding later on.

{\setlength{\tabcolsep}{5pt}
\begin{table}[t]
\caption{\small We report the Hits@50 metric for link prediction on COLLAB. We use the homomorphism basis of $n$-vertex paths $P_n$ and cliques $K_n$. The first column shows the additional graphs in the \spasm of each pattern. Since $\spasm(P_\ell) \subseteq \spasm(P_{\ell+1})$ we show only the additional basis elements in the respective rows.}
\label{tab:collab-paths}
\begin{center}
\begin{small}
\begin{sc}
\begin{tabular}{clccc}
\toprule
add. basis & Pattern & GCN &  GAT \\
\midrule
& {\scriptsize ---}       & 46.13\%\tiny{$\pm 2.10$} & 48.27\%\vartxt{1.05} \\
\midrule

& $K_3$           & 49.41\%\tiny{$\pm 0.42$} &  49.43\%\vartxt{0.73} \\

& $K_4$           & 47.76\%\tiny{$\pm 0.53$} &  48.35\%\vartxt{0.85}\\

& $K_5$           & 48.01\%\tiny{$\pm 0.87$} & 49.75\%\vartxt{0.16}\\

\midrule
\drawPone \drawPtwo \!\! \drawKthree& $P_4$   & 49.59\%\tiny{$\pm 0.23$} & 50.76\%\vartxt{0.51}\\

\midrule
\drawStarthree \drawKthreeplusone \drawCfour & $P_5$   & 49.60\%\tiny{$\pm 0.29$}  & 51.55\%\vartxt{0.94} \\
\midrule

    \drawStarthreePlusOne \, \drawAlmostKfour  \drawKthreePlusTwo  & \multirow{2}{*}{$P_6$}  & \multirow{2}{*}{ \textbf{50.35\%\tiny{$\pm 0.21$}} }& 
\multirow{2}{*}{ {\bf 51.62\%}\vartxt{0.66}}\\

\drawKthreeplusPtwo \, \drawCfourPlusOne \drawCfive 

 \\
\bottomrule
\end{tabular}
\end{sc}
\end{small}
\end{center}
\vskip -0.2in
\end{table}
}

\textbf{Results.}
From \Cref{tab:collab-paths} we see that our approach of injecting the basis homomorphism counts for paths outperforms the previously studied injection of clique counts in both models. This further demonstrates the targeted gain of expressiveness, irrespective of task, that can be realized through our approach.
We note that while short paths are seemingly simple structures, actually counting them is not expressible in 1-WL as illustrated by the cyclic graphs in the basis (and thus also in the tested, weaker models).
Despite the large graph size, we compute all necessary node-level homomorphisms counts in roughly 3 hours (see \Cref{app:runtimes}).

\textbf{Ablation with Positional Encoding.}
To study the impact of normalizing homomorphims counts with a sinusoidal positional encoding \cite{vaswani2017attention}, we perform an ablation with GAT on the COLLAB dataset. We compare the performance of using an MLP encoder to using a positional encoder for the $\homs(\spasm(P_6))$ features. From \Cref{fig:ablation}, we see that including counts via an MLP encoder actually degrades performance of the baseline model and introduces high variance. This is likely because homomorphism count values can grow extremely large in larger graphs, causing the model to treat the extra information as noise. Conversely, using a positional encoding allows us to normalize the raw homomorphism count values in a way that preserves the relative distances between different patterns across different graphs. However, adjusting the positional encoding dimension presented minimal effects in our ablation.

\begin{figure}[t]
	\centering
	\includegraphics[width=0.85\linewidth]{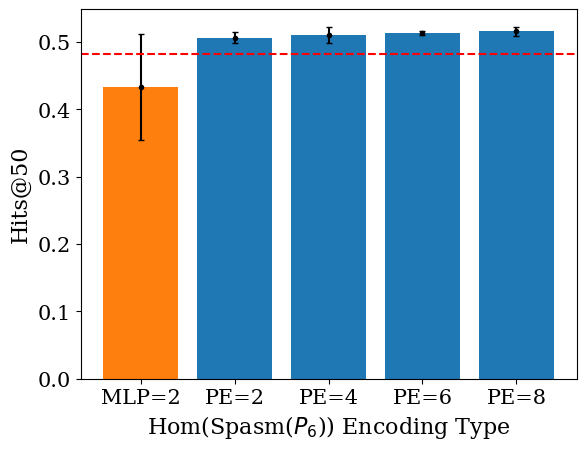}
	\caption{We report the effect of using different encoding methods and dimensions for the $\spasm(P_6)$ homomorphism count features with GAT on COLLAB. The dashed red line indicates the performance of GAT without additional features. Using a positional encoding (PE) boosts performance of the baseline model, whereas a standard MLP encoder degrades  performance. }
    \label{fig:ablation}
 \vspace{-1em}
\end{figure}

\subsection{Expressiveness Evaluation on BREC}
\label{sec:brec}
BREC is a recent expressiveness benchmark containing 400 pairs of non-isomorphic graphs \citep{wang2023towards}. The benchmark tests how capable models are at distinguishing the graphs in these pairs. The pairs are split into four categories: Basic, Regular, Extension and CFI graphs. 

\textbf{Experimental Setup.}
We select GIN and PPGN \citep{MaronBSL19} as the two models to extend with node-level homomorphism counts. We again opt against choosing a particular pattern and instead inject the homomorphism counts of $\graphs^{\sf con}_{\leq 5}$ as node features in both cases. For GIN + $\homs(\graphs^{\sf con}_{\leq 5})$, we use the same experimental setup as was used for GSN in \citep{wang2023towards}, replacing subgraph counts directly with homomorphism counts. For PPGN + $\homs(\graphs^{\sf con}_{\leq 5})$ we normalize features in accordance with \citet{MaronBSL19}. 

\textbf{Results.}
A summary of results is presented in Table \ref{tab:small-brec}. It compares our results to the previous state of the art I$^2$-GNN, and the closely related models PPGN and GSN.
We achieve state of the art results using PPGN + $\homs(\graphs^{\sf con}_{\leq 5})$. Notably, in the category of regular graphs, where PPGN was initially lacking, the inclusion of our approach enables the model to distinguish over twice as many graphs. This experimentally illustrates our analysis in \Cref{sec:missing-piece} in the context of higher-order models: $\graphs^{\sf con}_{\leq 5}$  contains graphs with treewidth greater than $2$ (in particular 4 graphs containing a $4$-clique). These homomorphism counts thus improve the 2-WL\footnote{Recall that we always refer to the folklore version of $k$-WL.} expressiveness of PPGN. 
GIN + $\homs(\graphs^{\sf con}_{\leq 5})$ also performs well by ranking 4th overall, outperforming many higher-order models (see Appendix \ref{app:brec} for full results).

{\setlength{\tabcolsep}{3.5pt}
\begin{table}[t]
\caption{Summary of BREC expressiveness experiments. Percentages refer to the percentage of graph pairs in each category that were correctly distinguished by the model.}
\label{tab:small-brec}
\begin{center}
\begin{small}
\begin{sc}
\begin{tabular}{lcccc|c}
\toprule

                 & Bas.  & Reg. & Ext. & CFI & Total \\
\toprule
2-WL$^4$             & 100\%      & 35.7\%        & 100\%           & 60\%      & 67.5\%      \\
\midrule
PPGN            & 100\%      & 35.7\%        & 100\%           & 23\%      & 58.2\%      \\
GSN              & 100\%      & 70.7\%        & 95\%            & 0\%       & 63.5\%      \\
$\text{I}^2$-GNN & 100\%      & 71.4\%        & 100\%           & 21\%      & 70.2\%      \\
\midrule
GIN+$\homs(\graphs^{\sf con}_{\leq 5})$ & 100\%      & 85.7\%        & 96\%            & 0\%       & 69\%        \\
PPGN+$\homs(\graphs_{\leq 5}^{\sf con})$ & 100\%      & 85.7\%        & 100\%           & 25\%      & \bf 76.25\%     \\
\bottomrule
\end{tabular}
\end{sc}
\end{small}
\end{center}
\vspace{-2em}
\end{table}
} 

\section{Discussion and Future Work}
We have proposed the injection of the homomorphism basis of graph parameters into GNNs as a flexible framework for increasing expressiveness in a targeted and efficient manner. We experimentally validate that this increase in expressiveness is reflected in significant performance gains in real-world GNN tasks.
This lays the foundation for a wide-range of  further applications: 
both node- and graph-level homomorphism counts have proven to boost performance in a variety of tasks and architectures.  Our experiments have primarily focused on standard architectures but motivate combination with complex top performing models to further boost their performance.
Moreover, the use of homomorphism bases induced by logical formulas (\Cref{sec:beyond}) has been beyond the scope of this paper but presents numerous interesting possibilities.

The broad applicability of our approach brings with it a number of technical question that require more detailed analysis. In large graphs, homomorphism counts can span a large range of values, opening up questions of how to best encode them to realize the theoretical expressiveness gain. Additionally, the choice of basis to inject depends on the desired expressivity and further work is needed to explore the effects of different bases in common GNN tasks.
 
\section*{Acknowledgements}
Emily Jin is partially funded by AstraZeneca and the UKRI Engineering and Physical Sciences Research Council (EPSRC) with grant code EP/S024093/1. Matthias Lanzinger acknowledges support by the Royal Society ``RAISON DATA" project (Reference No. RP\textbackslash{}R1\textbackslash{}201074) and by the Vienna Science and Technology Fund (WWTF) [10.47379/ICT2201]. This work was also partially funded by EPSRC Turing AI World-Leading Research Fellowship No. EP/X040062/1.

\section*{Impact Statement}
This paper presents work whose goal is to advance the field of Machine Learning. There are many potential societal consequences of our work, none which we feel must be specifically highlighted here.

\bibliography{lib}
\bibliographystyle{icml2024}

\newpage
\appendix
\onecolumn

%

\section{Technical Details}
\label{app:proofs}

\begin{proof}[Proof of \Cref{freesubgraphs}]
    Since $\subs(G, \cdot)$ is a graph motif parameter with support equal to $\spasm(G)$, we have that 
    $\subs(G, \cdot)$ is determined by $\homs(\spasm(G))$ since it functionally depends on the terms of $\homs(\spasm(G))$.
    
    Observe that if $F \in \spasm(G)$, then $\spasm(F) \subseteq \spasm(G)$. In particular, $H \in \spasm(F)$ implies that $H \isomorphic F/\rho$ for some partition $\rho$ of $V(F)$. Furthermore, $F \isomorphic G/\rho'$ where $\rho'$ is some partition of $V(G)$ since $F \in \spasm(G)$. Thus, $\rho$ is a partition of blocks of $\rho'$, and hence there is also a partition $\rho''$, obtained by applying the two partitions one after another, such that $H \isomorphic G/\rho''$.
\end{proof}

\begin{proof}[Proof of \Cref{prop:indsubs}]
    Investigation of the proof of \citet[Theorem 12]{RothS20} reveals that $\supp(\indsubs^\phi_k) \subseteq \graphs_k$.
    Thus, in relation to our setting this means that a GNN $\gnn$ with $\homs(\graphs_k)$ expresses $\indsubs^\phi_k$. We then apply \Cref{onlycon} to observe the final statement.
\end{proof}

\subsection{\Cref{thm.moreexpr}}
\label{app:moreexpr}

We will prove a more general version of the theorem in the main body that holds not just for the steps of the $k$-WL hierarchy, but for all GNN models that follow a certain natural behaviour with respect to homomorphisms. 
It is well known that $G \equiv_{k\mathrm{-WL}} H$ if and only if $\homs(F,G)=\homs(F,H)$ for all graphs $F$ of treewidth at most $k$ \citep{Dvorak10}. Furthermore, for every graph $F$ with treewidth greater than $k$, there are $G,H$ with $G \equiv_{k\mathrm{-WL}} H$ where $\homs(F,G)\neq \homs(F,H)$, i.e., the class of graphs with treewidth at most $k$ is the maximal class for which homomorphism vector equivalence is equivalent to $k$-WL equivalence \citep{neuen2023homomorphism}. This means that there is a concrete set of graphs whose homomorphism counts precisely determine the expressivity of the model. \citet{zhang2024beyond} recently showed that this property also holds true for other commonly studied models that do not precisely match the expressiveness of the $k$-WL hierarchy, e.g., Subgraph GNNs and Local $k$-GNNs. 

\begin{definition}
    Let $G,H$ be graphs and let $\mathcal{F}$ be a set of graphs. We say that $G$ and $H$ are \emph{homomorphism indistinguishable} under $\mathcal{F}$, and we write $G \equiv_{\mathcal{F}} H$, if 
    \(
    \homs(F,G) = \homs(F,H) 
    \) for all $F \in \mathcal{F}$.
\end{definition}

\begin{definition}
    We say that a class of graphs $\mathcal{F}$ is \emph{closed under restriction to connected components} if for every graph $F \in \mathcal{F}$, every \emph{maximal} connected component of $F$ is in also in $\mathcal{F}$.
\end{definition}

\begin{definition}
    Let $\gnn$ be a GNN model. We say that $\gnn$ is \emph{strongly homomorphism expressive} if there is a set of graphs $\mathcal{F}$ such that:
    \begin{enumerate}[label=(\roman*)]
        \item for every pair of graphs $G,H$, it holds that $G \equiv_{\gnn} H$ if and only if $G \equiv_{\mathcal{F}} H$,
        \item $\mathcal{F}$ is closed under restriction to connected components,
        \item and $\mathcal{F}$ is the maximal set satisfying the first property.
    \end{enumerate}
    We refer to $\mathcal{F}$ as the  homomorphism expressiveness of $\gnn$.
\end{definition}

As mentioned above, any GNN model with expressiveness equal to $k$-WL is strongly homomorphism expressive with $\mathcal{F}$ being the set of treewidth $k$ graphs. \citet{zhang2024beyond} showed that Local $k$-GNNs, local $k$-FGNNs, and Subgraph GNNs are also strongly homomorphism expressive. 

In this light it also becomes clear when
the expressiveness of strongly homomorphism expressive $\gnn$ with $\homs(\spasm(G))$ is strictly more expressive than $\gnn$ with $\homs(G, \cdot)$: exactly when there is some $F\in \spasm(G)$ that is not in the homomorphism expressiveness of $\gnn$.

The theorem we will prove in the rest of this section is the following stronger version of \Cref{thm.moreexpr}.

\begin{theorem}
\label{thm.app.expr}
Let $\Gamma$ be a connected graph motif parameter and let $\gnn$ be a strongly homomorphism expressive GNN model. 
    Then $\gnn$ with $\extrabasis_\Gamma$ is at least as expressive as $\gnn$ with $\{\Gamma\}$.
    Moreover, if at least two functions in $\homs(\mathsf{Supp}(\Gamma))$ are not expressed by $\gnn$ with $\{\Gamma\}$, then $\gnn$ with $\extrabasis_\Gamma$ is strictly more expressive.
\end{theorem}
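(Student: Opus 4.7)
The plan is to treat the two claims separately; both follow from the defining linear relation $\Gamma = \sum_{F \in \supp(\Gamma)} \alpha(F)\cdot \homs(F,\cdot)$ of \Cref{lgmp}, combined with a careful unwinding of the expressibility definition. The strong homomorphism expressiveness hypothesis is not directly invoked in the arguments, but it is what makes the statement ``at least two functions in $\homs(\supp(\Gamma))$ are not expressible'' concrete and checkable in the intended applications ($k$-WL, Local $k$-GNNs, Subgraph GNNs).

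For the first part (at least as expressive), I would take any graph parameter $f$ expressed by $\gnn$ with $\{\Gamma\}$ and show it is also expressed by $\gnn$ with $\extrabasis_\Gamma$. Fix graphs $G, H$ and let $\gnnmodel$ be the parameterization guaranteed by expressibility of $f$ via $\{\Gamma\}$. If $G \equiv_{\gnnmodel} H$ and $\homs(F, G) = \homs(F, H)$ for every $F \in \supp(\Gamma)$, then linearity gives $\Gamma(G) = \sum_F \alpha(F)\homs(F,G) = \sum_F \alpha(F)\homs(F,H) = \Gamma(H)$, and the hypothesis on $\gnnmodel$ yields $f(G) = f(H)$. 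The same $\gnnmodel$ therefore certifies expressibility of $f$ via $\extrabasis_\Gamma$.

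For the second part (strict), I would exhibit the witnessing function directly. By the hypothesis, pick any $F^\star \in \supp(\Gamma)$ with $\homs(F^\star,\cdot)$ not expressed by $\gnn$ with $\{\Gamma\}$. I claim that $\homs(F^\star,\cdot)$ is expressed by $\gnn$ with $\extrabasis_\Gamma$: since $\homs(F^\star,\cdot)$ itself belongs to $\extrabasis_\Gamma$, for any $G, H$ the joint feature-equality condition ``$a(G)=a(H)$ for all $a\in\extrabasis_\Gamma$'' already contains $\homs(F^\star,G)=\homs(F^\star,H)$, so the defining implication holds trivially for any $\gnnmodel$. Combined with part one, this gives the strict inequality.

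The main obstacle is disciplined bookkeeping with the nested quantifiers in the definition of expressibility; there is no genuinely combinatorial content beyond one linearity computation in each direction. As a closing remark, I would justify the form ``at least two'' in the hypothesis: the same identity used in part one shows that if every basis function except one, say $F^\star$, is expressible by $\gnn$ with $\{\Gamma\}$, then $\homs(F^\star,\cdot)$ is also expressible (solve $\Gamma = \sum_F \alpha(F)\homs(F,\cdot)$ for the single unknown, using $\alpha(F^\star)\neq 0$ by definition of $\supp(\Gamma)$). Hence the set of inexpressible basis functions can never have size exactly one, and the sharp non-trivial hypothesis is indeed the one stated in the theorem.
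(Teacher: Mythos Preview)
Your argument is correct for the statement as literally written, and it is much shorter than the paper's. Both parts follow directly from the definitions: the linear relation for $\Gamma$ handles part one, and in part two the witness $\homs(F^\star,\cdot)$ already sits inside $\extrabasis_\Gamma$, so its expressibility by $\gnn$ with $\extrabasis_\Gamma$ is automatic while its inexpressibility by $\gnn$ with $\{\Gamma\}$ is precisely the hypothesis. Your closing observation is also correct and in fact shows that, for the literal hypothesis, ``at least two'' and ``at least one'' are equivalent.

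The paper takes a substantially different and harder route, and the reason is that its proof is implicitly establishing a stronger result than the one written. In the paper's argument the working assumption is only that two basis functions $\homs(A,\cdot)$ and $\homs(B,\cdot)$ are not expressed by $\gnn$ \emph{alone} --- equivalently, by strong homomorphism expressiveness, that $A,B \notin \mathcal{F}$. This yields witness pairs $G_A \equiv_\gnn H_A$ with $\homs(A,G_A)\neq\homs(A,H_A)$ (and similarly for $B$), but with no control whatsoever over $\Gamma$ on those pairs. The paper then builds, via disjoint unions and categorical powers, new graphs $G_i,H_i$ satisfying simultaneously $G_i \equiv_\gnn H_i$, $\Gamma(G_i)=\Gamma(H_i)$, and $\homs(A,G_i)\neq\homs(A,H_i)$ or $\homs(B,G_i)\neq\homs(B,H_i)$. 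Closure of $\equiv_\gnn$ under these operations is exactly where strong homomorphism expressiveness enters, and additivity of $\Gamma$ over disjoint union is exactly where connectedness of $\Gamma$ enters --- the two hypotheses your proof never touches.

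So the difference is this: your argument settles the stated theorem but lets the hypothesis do all the work; the paper's construction earns a more applicable result, since deciding whether $\homs(F,\cdot)$ is expressed by $\gnn$ alone reduces to the concrete membership test $F \in \mathcal{F}$, whereas deciding expressibility by $\gnn$ \emph{with} $\{\Gamma\}$ is a priori opaque. Your own remark that neither connectedness nor strong homomorphism expressiveness is invoked should be read as a signal of this gap between the statement and what the paper's proof actually delivers.
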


Observe that the condition for the strictly more expressive case cannot be improved. If only one term $\homs(F^*,\cdot)$ is not expressed by $\gnn$, then
for every $G,H$ with $\Gamma(G)=\Gamma(H)$ and $G \equiv_\gnn H$ we have from the former that
\[
    \sum_{F \in \supp(\Gamma)} \alpha(F) \homs(F,G) = \sum_{F \in \supp(\Gamma)} \alpha(F) \homs(F,H).
\]
Since all terms in the sum, except for $F^*$ are expressed by $\gnn$, those terms must be equal on both sides. Hence we are left with $\homs(F^*,G)=\homs(F^*,H)$ whenever $\Gamma(G)=\Gamma(H)$.

The technical challenge in proving \Cref{thm.app.expr} comes from the analysis of how equality under $\gnn$ interacts with equality under $\Gamma$. The definition of strongly homomorphism expressive GNNs guarantees, that for every graph $F$ for which $\homs(F,\cdot)$ is not expressed by $\gnn$, there are graphs $G \equiv_\gnn H$ that differ on $\homs(F,\cdot)$. On a technical level, the existence of such graphs needs to imply that there are graphs that are equivalent under both $\gnn$ and $\Gamma$. Our proof shows how to construct such a graph whenever $\Gamma$ is connected. Our argument requires connectedness of the graph motif parameter in order to control $\Gamma$ in the construction. In particular, when $\Gamma$ is connected, it is additive over disjoint union, which ultimately allows us to construct graphs through these operations in a way where $\Gamma$ is controlled.

We will first observe a number important properties of strongly homomorphism expressive GNNs based on well known properties of $\homs$. To this end we first define two operations on graphs. For graphs $G,H$ we write $G + H$ for the disjoint union of $G$ and $H$, i.e., the graph that is obtained by relabeling $H$ to share no vertices with $G$ and then combining their vertices and edges into a single graph.
Moreover, we write $G \times H$ for their \emph{categorical product}, which is the graph $P$ with 
$V(P) = V(G) \times V(H)$ and $((a,u), (b,v))\in E(P)$ if and only if $(a,b) \in E(G)$ and $(u,v) \in E(H)$.
For positive integer $c$ we write $cG$ for the $c$-fold disjoint union of $G$ with itself, and similarly $G^c$ for the $c$-fold product.

\begin{proposition}[\citet{lovasz1967operations}]
\label{lem:hombasics}
For graphs $F,G,H$, the following statements hold:
\begin{itemize}
    \item $\homs(F+G, H) = \homs(F,H)\homs(G,H)$,
    \item $\homs(F, G+H) = \homs(F,G)+\homs(F,H)$ if $F$ is connected,
    \item $\homs(F, G\times H) = \homs(F,G)\homs(F,H)$.
\end{itemize}
\end{proposition}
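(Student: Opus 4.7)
My plan is to establish each of the three identities by exhibiting an explicit bijection between the relevant sets of homomorphisms. These are classical facts whose proofs amount to careful bookkeeping once one unpacks the definitions; the only genuine subtlety is the connectedness hypothesis in the middle identity.

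For the first identity, $\homs(F+G, H) = \homs(F,H)\homs(G,H)$, I would observe that $F+G$ has $V(F)$ and $V(G)$ as disjoint vertex sets with no edges running between them. Consequently, any homomorphism $h : F+G \to H$ restricts independently to homomorphisms $h|_{V(F)} : F \to H$ and $h|_{V(G)} : G \to H$, and conversely any such pair of restrictions glues into a single homomorphism from $F+G$ to $H$. This bijection immediately yields the product formula.

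For the third identity, $\homs(F, G \times H) = \homs(F,G)\homs(F,H)$, I would appeal directly to the definition of the categorical product: each homomorphism $\phi : F \to G \times H$ sends $v \in V(F)$ to a pair $(\phi_1(v), \phi_2(v))$, and the edge condition on $\phi$ decomposes exactly into simultaneous edge conditions for $\phi_1 : F \to G$ and $\phi_2 : F \to H$. The map $\phi \mapsto (\phi_1, \phi_2)$ is a bijection onto $\homs(F,G) \times \homs(F,H)$, giving the product formula.

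The substantive case is the middle identity, $\homs(F, G+H) = \homs(F,G) + \homs(F,H)$ for connected $F$. The plan is to show that under any homomorphism $h : F \to G+H$, the image of $F$ lies entirely in $V(G)$ or entirely in $V(H)$. Given any edge $(u,v) \in E(F)$, its image $(h(u), h(v))$ must be an edge of $G+H$, which has no edges running between its two components, so $h(u)$ and $h(v)$ lie in the same component. Propagating this observation along edges by a path argument, one concludes that all vertices of the connected graph $F$ map into a single component. The two cases are mutually exclusive and in each case $h$ restricts to a genuine homomorphism into the corresponding summand, giving the additive identity. The key point to be careful about is that connectedness is essential here: without it, a disconnected $F$ could send distinct components to different sides of $G+H$, and the correct identity would involve a more complicated sum over assignments of each component to $G$ or $H$.
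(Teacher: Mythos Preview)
Your proof is correct and follows the standard textbook argument for these classical identities. The paper does not actually prove this proposition: it is stated with a citation to \citet{lovasz1967operations} and used as a black box, so there is no ``paper's own proof'' to compare against. Your bijective arguments are exactly the ones one finds in Lov\'asz's book, and your remark that connectedness is essential in the second identity (since otherwise components of $F$ could be distributed across $G$ and $H$) is precisely the point.
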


From the first point of \Cref{lem:hombasics} it follows that for any disconnected graph $G$, the homomorphisms from $G$ into any other graph are determined fully by the homomorphism counts from the connected components of $G$. \Cref{onlycon} then follows immediately.

\begin{lemma}
\label{lem:gnn.algebra.closed}
    Let $\gnn$ be a strongly homomorphism expressive GNN. Let $G,G',H,H'$ be graphs such that $G \equiv_\gnn G'$ and $H \equiv_\gnn H'$.
    The following two statements hold:
    \begin{enumerate}
        \item $G + H \equiv_\gnn G' + H'$
        \item $G \times H \equiv_\gnn G' \times H'$
    \end{enumerate}
\end{lemma}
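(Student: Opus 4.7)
The plan is to leverage the characterization of strongly homomorphism expressive GNNs: by definition, there exists a set $\mathcal{F}$, closed under restriction to connected components, such that $\equiv_\gnn$ coincides with $\equiv_\mathcal{F}$ on all pairs of graphs. Thus, to establish both statements, it suffices to show that for every $F \in \mathcal{F}$, the homomorphism counts from $F$ into $G+H$ (resp. $G \times H$) agree with those into $G'+H'$ (resp. $G' \times H'$). The arithmetic of $\homs$ under disjoint union and categorical product given by \Cref{lem:hombasics} will do all of the real work.

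For the disjoint union case, I would first fix an arbitrary $F \in \mathcal{F}$ and decompose $F = F_1 + \cdots + F_k$ into its maximal connected components. By closure of $\mathcal{F}$ under restriction to connected components, each $F_i \in \mathcal{F}$, so the assumptions $G \equiv_\gnn G'$ and $H \equiv_\gnn H'$ yield $\homs(F_i, G) = \homs(F_i, G')$ and $\homs(F_i, H) = \homs(F_i, H')$ for every $i$. Applying the second identity of \Cref{lem:hombasics} (valid since each $F_i$ is connected), we then have
\[
\homs(F_i, G+H) = \homs(F_i,G)+\homs(F_i,H) = \homs(F_i,G')+\homs(F_i,H') = \homs(F_i, G'+H').
\]
Multiplying across all components via the first identity of \Cref{lem:hombasics} gives $\homs(F, G+H)=\homs(F, G'+H')$, hence $G+H \equiv_\mathcal{F} G'+H'$, which translates back to $\equiv_\gnn$.

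For the categorical product case, the argument is even more direct and does not require the connected-components closure. For any $F \in \mathcal{F}$, the third identity of \Cref{lem:hombasics} gives $\homs(F, G\times H)=\homs(F,G)\cdot\homs(F,H)$, and the assumed equivalences immediately yield $\homs(F, G \times H) = \homs(F, G' \times H')$. Again translating equivalence under $\mathcal{F}$ back to equivalence under $\gnn$ completes the argument.

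I do not expect a major technical obstacle here; the proof is essentially a routine combination of the Lovász identities with the defining properties of strongly homomorphism expressive GNNs. The only subtle point is that the disjoint union argument genuinely needs the closure of $\mathcal{F}$ under restriction to connected components, since the sum identity in \Cref{lem:hombasics} requires $F$ to be connected. This is precisely why property (ii) was built into the definition, and ensuring that this is explicitly invoked is the only care required in writing up the proof.
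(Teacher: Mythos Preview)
Your proposal is correct and follows essentially the same approach as the paper: both arguments reduce $\equiv_\gnn$ to $\equiv_\mathcal{F}$, apply the Lov\'asz identities from \Cref{lem:hombasics}, and use closure of $\mathcal{F}$ under connected components to pass from connected $F$ to arbitrary $F$. Your observation that the categorical product case does not actually require the closure property (since the product identity holds for all $F$, connected or not) is a small sharpening over the paper's ``analogous'' remark, but the overall route is the same.
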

\begin{proof}
Let $\mathcal{F}$ be the homomorphism expressiveness of $\gnn$.
    Let $F$ be a connected graph in $\mathcal{F}$. We have 
    \[  
    \homs(F, G+H) = \homs(F,G)+\homs(F,H) = \homs(F,G')+\homs(F,H') = \homs(F,G'+H')
    \]
    where the outer equalities are by \Cref{lem:hombasics} and the middle equality is by the assumption that $G \equiv_\gnn G'$ and $H \equiv_\gnn H'$. The argument for the second statement is analogous.

    This completes the argument for connected $F$.
    Since $\mathcal{F}$ is closed under restriction to
    connected components, every graph $F' \in \mathcal{F}$ is a disjoint union of connected graphs in $F$. By the first point of \Cref{lem:hombasics}, the equality on all the connected components of $F'$ immediately extends to $F'$.
\end{proof}

  \begin{proof}[Proof of \Cref{thm.app.expr}]
    Let $A,B$ be two graphs in $\supp(\Gamma)$ that are not expressed by $\gnn$ with $\Gamma$. Our plan will be to show that there are graphs $G,H$ that are equivalent under $\gnn$ and have $\Gamma(G)=\Gamma(H)$, but can be distinguished by either $\homs(A,\cdot)$ or $\homs(B,\cdot)$.

  \begin{claim} Assuming $\gnn$ with $\Gamma$ does not express at least one of $\homs(A,\cdot)$ or $\homs(B,\cdot)$, then there are graphs $G_A, H_A, G_B, H_B$ that satisfy the following properties:
    \begin{itemize}[topsep=2pt]
        \item $G_A \equiv_\gnn H_A$ and $G_B \equiv_\gnn H_B$ 
        \item $\homs(A,G_A) \neq  \homs(A,H_A)$ and $\homs(B, G_B) \neq \homs(B, H_B)$ 
       \item $\max(\homs(A,G_A),  \homs(A,H_A)) > \max(\homs(B, G_A), \homs(B, H_A))$%
    \end{itemize}
    \end{claim}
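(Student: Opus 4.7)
The plan is to build $G_A, H_A$ by taking a witness pair that shows $\gnn$ does not express $\homs(A,\cdot)$ and then \emph{amplifying} the $A$-signal against the $B$-signal by taking a disjoint union with many copies of a suitably chosen graph $X$; the pair $G_B, H_B$ is then just any witness pair for $\homs(B,\cdot)$ not being expressed.

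First, since $\gnn$ with $\Gamma$ fails to express $\homs(A,\cdot)$, $\gnn$ alone also fails (providing $\Gamma$ as an additional feature can only help), so there exist graphs $G_0, H_0$ with $G_0 \equiv_\gnn H_0$ and $\homs(A, G_0) \neq \homs(A, H_0)$; analogously I pick $G_0', H_0'$ for $B$. The key ingredient for the third condition is a graph $X$ with $\homs(A, X) > \homs(B, X)$; since $A \not\isomorphic B$, the classical theorem of \citet{Lovasz12} guarantees some $X$ with $\homs(A, X) \neq \homs(B, X)$, and by relabeling ($A \leftrightarrow B$ together with their witness pairs) I may assume the inequality goes in the right direction.

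For a large integer $c$ to be chosen, I set
\[
G_A := G_0 + cX, \quad H_A := H_0 + cX, \quad G_B := G_0', \quad H_B := H_0',
\]
where $+$ and $cX$ denote disjoint union. Iterated application of \Cref{lem:gnn.algebra.closed} (starting from $X \equiv_\gnn X$) yields $G_A \equiv_\gnn H_A$, while $G_B \equiv_\gnn H_B$ holds by choice. Since $A, B$ are connected (because $\Gamma$ is a connected motif parameter), \Cref{lem:hombasics} gives $\homs(F, G_A) = \homs(F, G_0) + c\,\homs(F, X)$ for $F \in \{A, B\}$, and the same for $H_A$. Consequently, the $A$-count difference on $G_A, H_A$ equals $\homs(A, G_0) - \homs(A, H_0) \neq 0$ independent of $c$, and
\[
\max(\homs(A, G_A), \homs(A, H_A)) - \max(\homs(B, G_A), \homs(B, H_A)) = c\bigl(\homs(A,X) - \homs(B,X)\bigr) + D,
\]
where $D$ is a constant depending only on $G_0, H_0, A, B$. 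The positivity of the coefficient of $c$ means this expression is positive for all sufficiently large $c$, establishing the third condition; the analogous non-vanishing of $\homs(B, G_B) - \homs(B, H_B)$ holds by the choice of $G_0', H_0'$.

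The main obstacle I expect is producing the amplifier $X$ with the correct orientation of the inequality: this is precisely where the \citet{Lovasz12} theorem, paired with the WLOG swap of $A$ and $B$, is needed. The remaining bookkeeping --- lifting \Cref{lem:gnn.algebra.closed} and the additivity from \Cref{lem:hombasics} across the $c$-fold disjoint union $cX$ --- is a routine induction.
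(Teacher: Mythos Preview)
Your proof is correct and follows essentially the same approach as the paper: both use Lov\'asz's theorem to find a separating graph $X$ with $\homs(A,X) \neq \homs(B,X)$, then take disjoint unions with $X$ to break any tie in the third condition while preserving $\equiv_\gnn$ via \Cref{lem:gnn.algebra.closed}. Your version with $cX$ for large $c$ is in fact slightly cleaner than the paper's, which only explicitly treats the case where the two maxima are equal and adds a single copy of $X$.
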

    \textit{Proof of claim:}
        The first two items follow immediately by assumption that $\gnn$ cannot express $\homs(A,\cdot)$ and $\homs(B, \cdot)$.
        For the last point, assume the two maxima are the same, say $\homs(A,G_A)=\homs(B,G_A) > \homs(B,H_A)$. Recall that it is known that there always exists a graph $X$ on which $\homs(A,X) \neq \homs(B,X)$ (\citep{lovasz1967operations}). Assume $\homs(A,X)>\homs(B,X)$ without loss of generality.  We have $G_A + X \equiv_\gnn H_A+X$ by \Cref{lem:gnn.algebra.closed}. That is we have $\homs(A,G_A+X) = \homs(A,G_A)+\homs(A,X)> \homs(B,G_A) + \homs(B,X) = \homs(B,G_A+X)$. Similarly, $\homs(A,H_A+X) = \homs(A,H_A)+\homs(A,X) < \homs(A,G_A)+\homs(A,X)$. So replacing $G_A,H_A$ with $G_A+X,H_A+X$ satisfies the statement.
        \hfill \claimqed

    If $\Gamma(G_A)= \Gamma(H_A)$ or $\Gamma(G_B)= \Gamma(H_B)$, the theorem would be proven: if $\Gamma$ would agree on one of the pairs, $\gnn$ with $\Gamma$ would not express the respective homomorphism count function.
    Assuming that $\Gamma$ is indeed the same on these two pairs of graphs, we show that we can always construct graphs $G,H$ that are still equivalent under $\gnn$ and disagree on either $\homs(A,\cdot)$ or $\homs(B, \cdot)$, but also have $\Gamma(G)=\Gamma(H)$.
    We make a key observation on $\Gamma$ first. Recall that $\Gamma$ is a \emph{connected} graph motif parameter, thus from \Cref{lem:hombasics} together with \Cref{lgmp}, we observe that $\Gamma(G+H)=\Gamma(G)+\Gamma(H)$. 

    For positive integer $i$, we define $\delta_i = \Gamma(G_A^i) - \Gamma(H_A^i)$ and similarly $\eta_i = \Gamma(H_B^i) - \Gamma(G_B^i)$ (recall, $G^i$ is the $i$-fold categorical product). We will only care about $\eta_0$, which we therefore refer to as simply $\eta$. Assume, w.l.o.g., $\eta > 0$, otherwise switch $G_B,H_B$ accordingly.
    
    For $i$ where $\delta_i > 0$ define $G_i := \eta G_A^i + \delta_i G_B$  an $H_i := \eta H_A^i + \delta_i H_B$. 
    \begin{claim}
        For every positive integer $i$ where $\delta_i > 0$ we have $\Gamma(G_i) = \Gamma(H_i)$
        and $G_i \equiv_\gnn H_i$.
    \end{claim}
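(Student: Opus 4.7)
The plan is to verify the two assertions independently, using as the only real tools the additivity of $\Gamma$ over disjoint union (established just before the claim) and \Cref{lem:gnn.algebra.closed} for closure of $\equiv_\gnn$ under disjoint union and categorical product.

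For the equality $\Gamma(G_i)=\Gamma(H_i)$, I would first recall that $\Gamma$ is a connected graph motif parameter, so combining \Cref{lem:hombasics} with \Cref{lgmp} yields $\Gamma(X+Y)=\Gamma(X)+\Gamma(Y)$, and in particular $\Gamma(cX)=c\,\Gamma(X)$ for every non-negative integer $c$. Applying this to the definitions gives
\[
\Gamma(G_i)-\Gamma(H_i) \;=\; \eta\bigl(\Gamma(G_A^i)-\Gamma(H_A^i)\bigr) \;+\; \delta_i\bigl(\Gamma(G_B)-\Gamma(H_B)\bigr) \;=\; \eta\,\delta_i \;-\; \delta_i\,\eta \;=\; 0,
\]
where in the last step I substitute the defining identities $\delta_i=\Gamma(G_A^i)-\Gamma(H_A^i)$ and $\eta=\Gamma(H_B)-\Gamma(G_B)$. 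This step is purely algebraic and should be routine.

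For the equivalence $G_i\equiv_\gnn H_i$, I would iterate \Cref{lem:gnn.algebra.closed}. Starting from $G_A\equiv_\gnn H_A$, an $i$-fold application of the second part of that lemma gives $G_A^i\equiv_\gnn H_A^i$; an $\eta$-fold application of the first part then yields $\eta G_A^i\equiv_\gnn \eta H_A^i$. In parallel, $G_B\equiv_\gnn H_B$ lifts to $\delta_i G_B\equiv_\gnn \delta_i H_B$ by the same reasoning. A final application of the disjoint-union closure to these two equivalences produces $\eta G_A^i+\delta_i G_B \equiv_\gnn \eta H_A^i + \delta_i H_B$, which is precisely $G_i\equiv_\gnn H_i$.

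I do not anticipate a genuine obstacle here: the claim is essentially a bookkeeping step that packages the construction so that, in the surrounding argument, the pair $(G_i,H_i)$ can replace $(G_A,H_A)$ while respecting $\Gamma$. The only point worth flagging is the implicit assumption $\eta>0$ (and the hypothesis $\delta_i>0$), which is why $\eta G_A^i$ and $\delta_i G_B$ are bona fide disjoint unions of positively many copies, legitimizing the additivity arguments above; this was arranged in the preceding paragraph by swapping $G_B$ and $H_B$ if needed.
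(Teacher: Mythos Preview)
Your proposal is correct and follows essentially the same approach as the paper: both use additivity of $\Gamma$ over disjoint union to get the $\Gamma$-equality via the cancellation $\eta\delta_i-\delta_i\eta=0$, and both invoke \Cref{lem:gnn.algebra.closed} for closure of $\equiv_\gnn$ under product and disjoint union. If anything, you spell out the iterated application of \Cref{lem:gnn.algebra.closed} more carefully than the paper does.
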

    \textit{Proof of claim:}
    Recall from above that $\Gamma(cG) = c\Gamma(G)$.
    For the equality under $\Gamma$ we then observe that
    \begin{align*}
    & \Gamma(G_i) - \Gamma(H_i) =\\ 
    & \eta \Gamma(G_A^i) + \delta_i \Gamma(G_B) - \eta \Gamma(H_A^i) - \delta_i \Gamma(H_B) = \\ 
    & \delta_i (\Gamma(G_B) - \Gamma(H_B)) + \eta (\Gamma(G_A^i) - \Gamma(H_A^i))  =\\
    & \delta_i (-\eta) + \delta_i\eta= 0
    \end{align*}
    Since $G_i$ and $H_i$ are constructed from disjoint unions and products of graphs that are equivalent under $\gnn$, we also have $G_i \equiv_\gnn H_i$
    by \Cref{lem:gnn.algebra.closed}. \hfill \claimqed

    There are arbitrarily many $i$ such that $\delta_i$ has positive sign (if the sign is negative for all $\delta_i$, switch $G_A,H_A$). As above, we are done if $\homs(A, G_i) \neq \homs(A,H_i)$ for any $i$ where $\delta_i>0$ since we have $G_i\equiv_\gnn H_i$ and $\Gamma(G_i)=\Gamma(H_i)$ (and the same for $B$). Let us thus assume that this is not the case. For each of $i$ we then get the following equations:
    \begin{align}
        \eta \homs(A,G_A^i) + \delta_i \homs(A, G_B) = \homs(A,G_i) = \homs(A,H_i)= \eta \homs(A,H_A^i) + \delta_i \homs(A, H_B)  \\
         \eta \homs(B,G_A^i) + \delta_i \homs(B, G_B) = \homs(B,G_i) = \homs(B,H_i)= \eta \homs(B,H_A^i) + \delta_i \homs(B, H_B)  \label{proofB}
    \end{align}
    First, suppose that $ \homs(B,G_A^i) = \homs(B,H_A^i)$. Simplifying \Cref{proofB} gives $\homs(B,G_B)=\homs(B,H_B)$ which contradicts our choice of $B, H_B,G_B$.
    In the other case we have $\homs(B,G_A^i) \neq \homs(B,H_A^i)$.
    Solving both equations for $\frac{\eta}{\delta_i}$ (recall $\delta_i \neq 0$), identifying and rewriting gives

    \begin{align}
    \label{eq:key}
      \frac{\homs(A, H_B) - \homs(A, G_B)}{\homs(B, H_B) - \homs(B, G_B)}
     = 
     \frac{\homs(A,G_A^i) - \homs(A,H_A^i)}{ \homs(B,G_A^i) - \homs(B,H_A^i)}
     \end{align}
    
     Observe that the left side of \Cref{eq:key} is independent of $i$, and in fact is the same for arbitrarily other integers $j$.
     That is, the ratio on the right must be equal regardless of choice of integers. Recalling that $\homs(F, G^c) = \homs(F,G)^c$ from \Cref{lem:hombasics} we see
     \begin{align}
         \frac{\homs(A,G_A)^i - \homs(A,H_A)^i}{ \homs(B,G_A)^i - \homs(B,H_A)^i} = \frac{\homs(A,G_A)^j - \homs(A,H_A)^j}{ \homs(B,G_A)^j - \homs(B,H_A)^j}
     \end{align}
     which contradicts our initial choice of graphs where we had $\max(\homs(A,G_A),  \homs(A,H_A)) > \max(\homs(B, G_A), \homs(B, H_A))$. That is, the ratio on the right-hand side of \Cref{eq:key} cannot stay constant. The equation was derived from the assumption that $\homs(A,\cdot)$ or $\homs(B,\cdot)$ both (individually) are the same for $G_i$ and $H_i$. In consequence, there must be an $i$ for which $\Gamma(G_i)=\Gamma(H_i)$ and $G_i \equiv_\gnn H_i$ but $\homs(A, G_i)\neq \homs(A, H_i)$ or $\homs(B, G_i)\neq \homs(B, H_i)$. Say $\homs$ is different for $A$, then $\gnn$ with $\Gamma$ does not express $\homs(A,\cdot)$.%
\end{proof}

\subsection{\Cref{thm:local}}

In addition to the counting functions introduced in the main body, we will require the counts for two more special kinds of homomorphisms. To that end we write $\injs(G,H)$ for the number of \emph{injective} homomorphisms from $G$ to $H$. Moreover, an \emph{automorphism} of $G$ is a homomorphism from $G$ to $G$, and we write $\mathsf{Aut}(G)$ for the number of automorphisms of $G$. In particular, we will care about anchored graphs $G$ and the number of automorphisms that map the anchor to itself, which we denote as $\mathsf{Aut}^\anchor(G)$. 
We write  $\subs(G,H)[\anchor=v]$ for the number subgraphs of $G$ that are isomorphic to anchored graph $G$ when $v$ in $H$ is set as the anchor of $H$. This is equivalent to $\subs(G,H,v)$ as defined in the main body of the text but aligns better with the proof.

Finally, we will need to reason about the lattice of vertex partitions of a graph. Let $\rho,\rho'$ be partitions of $V(G)$. We say that $\rho$ is a coarsening of  $\rho'$, or $\rho \geq \rho'$ if for every block $B'\in \rho'$ there is a block $B \in \rho$ such that $B' \subseteq B$. 

\begin{proof}[Proof of \Cref{thm:local}]
     With the appropriate setup we can follow the same steps as the analogous graph level proofs by \citet{CurticapeanDM17}. Recall that we want to show that
    \[
    \subs(G,H)[\anchor=v] = \sum_{F \in\spasm^\anchor(G)} \alpha_F \cdot \homs(F,H)[\anchor \mapsto v]
    \]
    
    We start from two basic observations. First
    \begin{equation}
        \homs(G,H)[\anchor \mapsto v] = \sum_\rho \injs(G/\rho,H)[\anchor \mapsto v]
        \label{eq:zeta}
    \end{equation}
    where the sum ranges over all partitions of $V(G)$. The second is that
    \begin{equation}
        \injs(G,H)[\anchor \mapsto v] =  \mathsf{Aut}^\anchor(G) \subs(G,H)[\anchor=v]
    \label{eq:inj}
    \end{equation}
    where $\mathsf{Aut}^\anchor$ is the number of automorphisms of $G$ that map the anchor to itself.
    Observe that $\homs(G,H)[\anchor \mapsto v]$ is the upward zeta transformation of $\injs(G/\rho,H)[\anchor \mapsto v]$ on the partition lattice from the trivial partition $\bot$ where every block is a singleton.

    M\"obius transformation of \Cref{eq:zeta} then gives us the expression
    {
    \begin{equation}
        \injs(G/\rho,H)[\anchor \mapsto v] = \sum_{\rho'\geq \rho}  (-1)^{|\rho|-|\rho'|}\left(\prod_{B \in \rho'}(\lambda(\rho,\rho',B)-1)! \right) \homs(G/\rho',H)[\anchor \mapsto v]
    \end{equation}
    }
    which, fixing $\rho=\bot$ is
    \begin{equation}
        \injs(G,H)[\anchor \mapsto v] = \sum_{\rho'\geq \rho}  (-1)^{|V(G)|-|\rho'|}\left(\prod_{B \in \rho'}(|B|-1)! \right) \homs(G/\rho',H)[\anchor \mapsto v]
    \end{equation}
    
    Note that by definition, $G/\rho'$ is isomorphic to a term in $\spasm^\anchor(G)$ for every partition $\rho'$.
    Collecting the terms for every graph in $\spasm^\anchor(G)$ we then get
    {
    \begin{equation}
        \injs(G,H)[\anchor \mapsto v] = \sum_{F \in \spasm^\anchor(G)}  (-1)^{|V(G)|-|V(F)|} \homs(F,H)[\anchor \mapsto v] \left(\prod_{\substack{B \in \rho' \\ G/\rho' \isomorphic F}}(|B|-1)! \right)
    \end{equation}
    }
    Combining this with \Cref{eq:inj} we get desired expression for $\subs(G,H)[\anchor=v]$. In particular, the coefficient $\alpha_F$ is
    \begin{equation}
         \alpha_F  = \frac{(-1)^{|V(G)|-|V(F)|}}{\mathsf{Aut}^\anchor(G)} \left(\prod_{\substack{B \in \rho' 
         \\ G/\rho' \isomorphic F}}(|B|-1)! \right)
    \end{equation}
\end{proof}

\section{Regarding Complexity}
\label{app:complexity}

In this section we summarize key results from computational complexity theory that pertain to the argument of \Cref{sec:efficient}. For the sake of a simpler and more streamlined presentation we will first focus on the special case of subgraph counting. Afterwards, we discuss how these results apply in general for graph motif parameters and how this relates to practical algorithms.
To begin, we thus consider the following algorithmic problem.
\begin{problem}{\subprob}
  Input & Graphs $G$, $H$ \\
  Output & $\subs(G,H)$
\end{problem}

The problem is well known to be $\#\sf P$-hard (intuitively, the counting analogue of $\sf NP$-hardness) and therefore considered to generally be intractable. It is then natural to consider for which pattern graphs -- $G$ in the definition of \subprob above -- there might be specialized, more efficient algorithms. This particular question is primarily studied in the context of \emph{parameterized complexity theory} and in particular the problem parameterized by the pattern graph $G$, defined as follows.
\begin{problem}{p-\subprob}
  Input & Graphs $G$, $H$ \\
  Parameter & $G$ \\
  Output & $\subs(G,H)$
\end{problem}
In the parameterized setting, the classical requirement for tractability is relaxed. For graph $G$ we write $|G|$ for $|V(G)|+|E(G)|$\footnote{For the sake of complexity analysis this is the size of a standard representation of a graph in a Turing Machine up to a $\log$ factor.}. A \emph{fixed-parameter tractable} (or fpt) algorithm for \textsc{p-\subprob} is an algorithm whose runtime is bounded by $f(|G|)|H|^{O(1)}$ where $f$ is a computable function. That is, we allow some possible superpolynomial effort in $G$, as long as this yields an algorithm that is polynomial in the usually much bigger host graph $H$. 

In general, it is easy to see that  \psubprob is hard for the complexity class $\sf \#W[1]$ (counting the number of $k$-cliques in a graph is the canonical $\sf \#W[1]$-complete problem when parameterized by $k$). This class can be understood as the parameterized counting complexity equivalent of $\sf NP$ and it is a standard assumption in parameterized complexity that hardness for $\sf \#W[1]$ implies that there is no fpt algorithm for the problem. This means that there is no general algorithm that can efficiently compute $\subs$. However, recent results of \citet{CurticapeanDM17} have greatly clarified the picture for which specific pattern graphs $G$ an fpt algorithm is possible.
Namely, the complexity of \psubprob depends precisely on the treewidth of the graphs in $\spasm(G)$. 

In particular, there is an algorithm that runs in time $f(|G|) O(|V(H)|^{k+1})$ where $k$ is the maximal treewidth of graphs in the $\spasm$. The algorithm is conceptually simple once we know that $\subs(G, \cdot)$ is a graph motif parameter. 
\begin{enumerate}
    \item First compute $\spasm(G)$ and the corresponding coefficients. This clearly is independent of the graph $H$ and takes $g(|G|)$ time. This can be seen as a preprocessing step from a practical point of view.
    \item For each $F \in \spasm(G)$, compute $\homs(F,H)$ and then compute $\subs(G,\cdot)$ according to \Cref{lgmp}. Computing the number of homomorphisms $\homs(F,H)$ using a tree decomposition is well understood and requires $O(|H|^k)$ time, where $k$ is the treewidth of $F$ \citep{DiazST02}.
\end{enumerate}
Note that this algorithm is universal for any graph motif parameter. The only part that changes, is how to compute the basis and the coefficients in the first step. This of course entirely depends on the kind of graph motif parameter, but by definition of a graph motif parameter, the basis does not depend on the input graph ($H$ in this subgraph counting setting).

However, not only are homomorphism counts an efficient way to compute subgraph counts. In a sense, there can be no more efficient way to compute subgraph counts. Formally speaking, both sides of \Cref{lgmp} are fpt-interreducible to each other. More precise analysis reveals that these reductions can be performed very efficiently, and in fact so efficiently that we obtain very precise lower bounds for these problems. %
\begin{theorem}[Simplified, \citet{CurticapeanDM17}]
\label{complex.monotone}
    Let $G$ be a graph and $k$ be the maximum treewidth in $\spasm(G)$. Then there is no $f(|G|)|H|^{o(k/\log k)}$ time algorithm that computes $\subs(G, \cdot)$ for input $H$ if \#\textsf{ETH} holds.\footnote{\#\textsf{ETH}  is the counting version of the Exponential Time Hypothesis, a standard assumption in parameterized complexity.}
\end{theorem}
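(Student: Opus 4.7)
The plan is to establish the lower bound by combining two known pieces: a \emph{complexity monotonicity} principle for linear combinations of homomorphism counts, and Marx's $\#\mathsf{ETH}$ lower bound for counting homomorphisms from graphs of treewidth $k$. Write $\subs(G,\cdot)=\sum_{F\in\spasm(G)}\alpha_F\cdot\homs(F,\cdot)$ with all $\alpha_F\neq 0$, and let $F^{\star}\in\spasm(G)$ achieve the maximum treewidth $k$. Suppose for contradiction that $\subs(G,\cdot)$ can be computed in time $f(|G|)\cdot|H|^{o(k/\log k)}$. I will show this yields an equally fast algorithm for $\homs(F^{\star},\cdot)$, which contradicts the known hardness result.

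The key step is the monotonicity argument: since $\subs(G,\cdot)$ is a nontrivial linear combination of the $\homs(F,\cdot)$, one can in principle recover each $\homs(F,H)$ from polynomially many evaluations of $\subs(G,\cdot)$ on suitably modified host graphs. The standard trick is to evaluate $\subs(G,\cdot)$ on tensor powers or disjoint unions of $H$ with a collection of fixed ``probe'' graphs. Using the multiplicativity identities from \Cref{lem:hombasics} ($\homs(F,G_1+G_2)=\homs(F,G_1)+\homs(F,G_2)$ when $F$ is connected, and $\homs(F,G_1\times G_2)=\homs(F,G_1)\homs(F,G_2)$), evaluating $\subs(G,\cdot)$ on $|{\spasm(G)}|$ cleverly chosen inputs produces a linear system in the unknowns $\homs(F,H)$ whose coefficient matrix is a Vandermonde-like matrix and hence invertible (by a suitable choice of pairwise non-homomorphism-equivalent probes, which exist by the Lov\'asz argument used in \Cref{thm.app.expr}). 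Solving this system isolates $\homs(F^{\star},H)$ in time $g(|G|)\cdot|H|^{o(k/\log k)}$ for some computable $g$.

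The second ingredient is Marx's theorem (building on Dalmau--Jonsson): under $\#\mathsf{ETH}$, there is no algorithm computing $\homs(F^{\star},H)$ in time $f'(|F^{\star}|)\cdot|H|^{o(\mathrm{tw}(F^{\star})/\log \mathrm{tw}(F^{\star}))}$. Since $\mathrm{tw}(F^{\star})=k$, the algorithm obtained in the previous step directly contradicts this bound, completing the reduction. Formally, one needs to argue the probes can be constructed with size bounded by a function of $|G|$ alone so that the $f(\cdot)$ factor absorbs all construction overhead.

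The main obstacle is the monotonicity / interpolation step: producing the probe graphs and the invertible coefficient matrix in a way that the total host graph size fed to the hypothetical $\subs$-algorithm stays linear in $|H|$, so that the exponent $o(k/\log k)$ is preserved through the reduction. This is where Curticapean--Dell--Marx do the delicate work, using category-theoretic properties of the tensor product of graphs and carefully chosen probes of size depending only on $|G|$. Once this is set up, the contradiction with Marx's bound is immediate. A secondary subtlety is that $\spasm(G)$ consists of quotients of $G$, and one must check that $F^{\star}$ is indeed connected (so that the additive identity for $\homs$ applies cleanly); this holds whenever $G$ itself is connected, and the general case reduces to the connected one via the product of connected components.
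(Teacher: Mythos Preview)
The paper does not give its own proof of this statement; it is quoted (explicitly labelled ``Simplified'') from \citet{CurticapeanDM17} and used as a black box in the complexity discussion. Your sketch is essentially the argument of that original paper: write $\subs(G,\cdot)$ in the homomorphism basis, invoke the \emph{complexity monotonicity} principle (an interpolation via tensor products that recovers each $\homs(F,\cdot)$ from oracle access to the full linear combination), and then apply Marx's \#\textsf{ETH}-based lower bound for counting homomorphisms from high-treewidth patterns. So there is nothing in the present paper to compare against, and your outline matches the source.

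One point worth tightening: the statement is informal when read for a single fixed $G$, and your proof inherits that informality. Marx's bound is asymptotic and applies to \emph{classes} of patterns with unbounded treewidth; for any individual $F^\star$ the quantity $k/\log k$ is a constant and an $|H|^{o(k/\log k)}$ lower bound is vacuous. The rigorous version quantifies over a recursively enumerable class $\mathcal{G}$ of pattern graphs, and your reduction must be uniform in $G\in\mathcal{G}$: the number and size of the probe graphs, and the work to assemble and invert the coefficient matrix, must all be bounded by a computable function of $|G|$ alone so that the exponent on $|H|$ is preserved. You flag exactly this as ``the main obstacle,'' and indeed the CDM construction handles it; once that is in place your plan goes through.
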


Note that it is conjectured that the bound of $o(k/\log k)$  can be improved to $o(k)$ \citep{Marx10}. Under this conjecture, this shows that there is no way to really significantly improve on the computation of $\subs(G,H)$ via the homomorphism basis as was described above.

Intuitively, this then means that graph motif parameters cannot be computed more efficiently than their basis terms.
Thus supporting the claim from \Cref{sec:efficient}, that computing a graph motif parameter directly cannot be significantly less computational effort than computing the homomorphism counts for the basis graphs.

\paragraph{Beyond $\subprob$}
First, we emphasize that the original result for \Cref{complex.monotone} by \citet{CurticapeanDM17} is not restricted to subgraph counting but holds for arbitrary graph motif parameters (the parameter then becomes the encoding size of the basis).
The same principle generalizes even further and is notably not limited to plain graphs. When we have directed graphs (and thus also directed patterns), the picture changes slightly but still no improvement over counting homomorphisms in the basis is possible \citep{BressanLR23}. Analogous statements hold with node and/or edge-labels, and general relational structures \citep{ChenM16}.

\paragraph{Practical Considerations}

In the context of this paper it is of particular note that the fpt algorithm that is obtained from using the homomorphism basis is practical. To the best of our knowledge, there is no general implementation for \subprob that does not use the homomorphism basis and avoids exponential runtime in either $|V(G)|$ or $|E(G)|$. The reason for this exponential runtime is that they enumerate through the possible subgraphs, solving the counting problem by actually counting the subgraphs at enormous cost.

In contrast, the fpt algorithm consists of two steps, both of which are feasible for realistic pattern sizes. First, the \spasm and the coefficients need to be computed. Even naive algorithms that enumerate all possible partitions are efficient enough up to patterns of size 12 without specialized hardware (from preliminary experiments).
In a second step, the algorithm requires the computation of $\Hom(F,H)$ for each $F \in \spasm(G)$. While this may superficially seem similar to \subprob, counting homomorphisms is much easier as they can be counted via dynamic programming on tree decompositions. This process requires time $O(|V(H)|^{k+1})$ where $k$ is the treewidth of $F$. In practice this is magnitudes easier on most patterns, e.g., compare the exponents mentioned above for the $7$-cycle $C_7$ to the observation that every graph in the basis of $C_7$ has treewidth $2$.

Of further practical relevance is that not only is counting homomorphisms easier than naive subgraph counting algorithms, but also much more flexible for large tasks. Recall that we need to compute $\homs(F,H)$ for each $F \in \spasm(G)$. However, these computations are entirely independent and can all be performed in parallel without any deeper algorithmic considerations. We demonstrate the efficacy of counting homomorphisms for GNN tasks by computing all the homomorphism counts for our experiments in significantly less time than is required for the training of the respective models. See \Cref{tab:counting-time} for details.

\section{More on Node-Level Counting}
\label{app:nodelevel}
In \Cref{sec:node.level} we show how homomorphism counts can also determine node level subgraph counts for each orbit of the pattern graph. We note that the \spasm itself is not enough to determine such node-level subgraph counts via \Cref{lgmp}.

We demonstrate this claim by example. In \Cref{ex:nodehoms} we show why node-level homomorphism for the part of $\spasm^\anchor$ without the second version of the 3 vertex path are insufficient. \Cref{ex:nodehoms1} and \Cref{ex:nodehoms2} illustrate the homomorphisms to two graphs $H_1$ and $H_2$ (gray arrows show mappings for a second homomorphism, all cases have only 1 or two homomorphisms), both clearly without any $\drawCfour$ subgraphs. 
From left to right there are $2$, $2$, and $1$ homomorphisms into $H_1$ and $2$, $1$, and $1$ homomorphisms into $H_2$.
Thus, to express the subgraph count as a linear combination of homomorphism counts would require constants $\alpha_1,\alpha_2,\alpha_3$ such that
\[
    \alpha_1 \cdot 2 + \alpha_2 \cdot 2 + \alpha_3 \cdot 1 = \alpha_1 \cdot 2 + \alpha_2 \cdot 1 + \alpha_3 \cdot 1
\]
Clearly this implies $\alpha_2=0$ and we see that these graphs are insufficient to express the number of 4-cycles a vertex is part of.
\begin{figure}[t]
\centering
\begin{subfigure}{0.87\textwidth}
        \centering
    \includegraphics[width=0.9\textwidth]{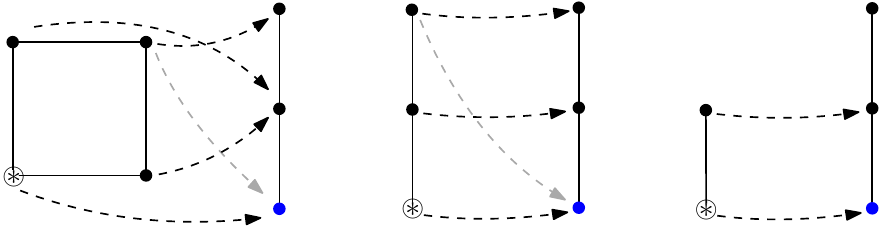}
    \caption{Homomorphisms from $\spasm(\drawCfour)$ to $H_1$ with the anchor mapping to the blue vertex.}
    \label{ex:nodehoms1}
\end{subfigure}

\begin{subfigure}{0.87\textwidth}
        \centering
    \includegraphics[width=0.9\textwidth]{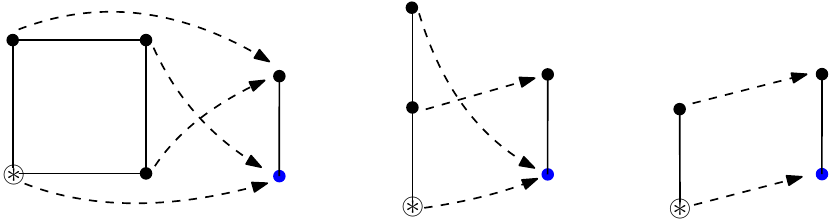}
    \caption{Homomorphisms from $\spasm(\drawCfour)$ to $H_2$ with the anchor mapping to the blue vertex.}
    \label{ex:nodehoms2}
\end{subfigure}
\caption{}
\label{ex:nodehoms}
\end{figure}

Similarly, it is insufficient to only count the paths from the 3-vertex path anchored in the middle vertex, or even being provided only the sum of both 3-vertex path counts. The counterexample for both these cases is the $n$-vertex star graph: the path anchored at the middle vertex has $(n-1)^2$ homomorphisms to the center vertex, whereas the number of homomorphisms from all three other patterns is $n-1$. 

To supplement the discussion of \Cref{app:complexity} we note that it is straightforward to reduce graph-level subgraph counting to node-level subgraph counting. Suppose we want to count $\subs(G,H)$. Create the anchored graph $G'$ by adding a new vertex $v'$ to $G$ that is connected to every other vertex. Additionally, set $v'$ as the anchor of $G'$. Similarly, create $H'$ by adding a new vertex $u'$ to $H$ that is connected to all other vertices.
It is then easy to see that $\subs(G,H) = \subs(G',H', u')$. Furthermore, we can observe that since new vertex $v'$ is connected to every other vertex, it can never be contacted with any other vertex to create a loop-free quotient. That is, every graph $F \in \spasm(G')$ is a graph in $\spasm(G)$ with an additional vertex that is connected to all other vertices. This means that the maximal treewidth in $\spasm(G')$ is at most $1$ higher than $\spasm(G)$.  Hence, we have a linear reduction from graph-level subgraph counting reduces to node-level subgraph counting, that increases the treewidth of the pattern at most by $1$, i.e., node-level subgraph counting also critically depends on the treewidth of the \spasm.

\section{Additional Experimental Details}
\label{app:exp-details}

Here, we provide all the experimental details and hyperparameters for the results presented in Section \ref{sec:experiments}. The code and instructions to reproduce our results can be found at this GitHub repository: \href{https://github.com/ejin700/hombasis-gnn}{https://github.com/ejin700/hombasis-gnn}.

\paragraph{Compute Resources.} 
All homomorphism counting and experiments for ZINC, COLLAB, and BREC were run on a cluster with NVIDIA A10 GPUs (24 GB). Each node had 64 cores of Intel(R) Xeon(R) Gold 6326 CPU at 2.90GHz and ~500GB of RAM. QM9 experiments were run on a cluster with NVIDIA A100 GPUs. All experiments used 1 GPU at a time.

\subsection{Runtime for Counting Homomorphisms}
\label{app:runtimes}

\begin{table*}[t]
\caption{Time to compute all the homomorphism counts used in each experiment in Section \ref{sec:experiments} reported as elapsed real time (wall time). }
\label{tab:counting-time}
\begin{sc}
\begin{small}
\begin{center}
\begin{tabular}{lccccccc}
\toprule

Dataset                 & \#Graphs                 & avg. \#Nodes                     & avg. \#Edges                     & Basis & Basis size & \#cores & Time   \\
\midrule
\multirow{3}{*}{ZINC}   & \multirow{3}{*}{12,000}  & \multirow{3}{*}{$\sim$23.2} & \multirow{3}{*}{$\sim$49.8} & $\spasm(C_7)$    & 12         & 48      & 1m24s      \\
                        &                          &                             &                             & $\spasm(C_8)$    & 35         & 48      & 4m45s      \\
                        &                          &                             &                             & $\spasm^{\anchor}(C_7, C_8)$   & 118        & 48      & 19m10s \\
\midrule
\multirow{2}{*}{QM9}    & \multirow{2}{*}{130,831} & \multirow{2}{*}{$\sim$18.0} & \multirow{2}{*}{$\sim$37.3} & $\graphs^{\sf con}_{\leq 5}$    & 30         & 48      & 42m15s \\
                        &                          &                             &                             & $C_6$    & 1          & 48      & 2m2s      \\
\midrule
\multirow{6}{*}{COLLAB} & \multirow{6}{*}{1}       & \multirow{6}{*}{235,868}    & \multirow{6}{*}{967,632}  & $K_3$  & 1     & 20      & 3m48s      \\
                        &                          &                             &                             & $K_4$    & 1          & 20      & 5m30s      \\
                        &                          &                             &                             & $K_5$    & 1          & 20      & 6m29s      \\
                        &                          &                             &                             & $\spasm(P_4)$    & 4          & 20      & 4m16s  \\
                        &                          &                             &                             & $\spasm(P_5)$    & 8          & 20      & 39m23s \\
                        &                          &                             &                             & $\spasm(P_6)$    & 15         & 20      & 3h6m      \\
\midrule
BREC                    & 51,200                   & $\sim$34.7                 & $\sim$286.9                & $\graphs^{\sf con}_{\leq 5}$    & 30         & 48      & 53m53s      \\
\bottomrule
\end{tabular}
\end{center}
\end{small}
\end{sc}
\end{table*}

We report the time required to compute all the homomorphism counts that were used for each experiment in Table \ref{tab:counting-time}. We also provide additional details regarding the size of the graphs in each dataset and the amount of compute used to generate these counts. From Table \ref{tab:counting-time}, we can clearly see that the time required to generate these homomorphism counts is almost negligible, especially since this counting step need only be performed once for a given parameter and dataset. In other words, once the counts have been obtained, they can be added to any existing model or architecture. 

The computational efficiency of performing this homomorphism counting is especially apparent for QM9, where we are able to generate all $\homs(\graphs^{\sf con}_{\leq 5})$ counts for over 130,000 individual graphs in roughly 40 minutes. This averages out to be less than 0.02 seconds per graph to count the homomorphisms for all 5-vertex connected components. For comparison, training a single run of R-GCN on QM9 to predict a single target property took several hours (more details on this in Appendix \ref{app:qm9}).

For ogbl-COLLAB, we only perform counting on the training graph, which is a subset of the entire dataset. This is because for link prediction, a subset of the edges are removed from training and used for validation and testing. Due to the memory constraints of our compute resources, we use a slightly modified counting method for counting the number of 5-clique homomorphisms in COLLAB, which we provide in our codebase. Finally, we note that we do not perform counting for $\homs(K_5)$ on the 4-vertex graphs in BREC because the graphs are constructed in a way that makes counting them infeasible.

\subsection{Generating Node-level Features using Homomorphism Counts}

In multiple of the experiments we inject homomorphism counts of graphs in $\spasm(G)$ at node-level. In the terminology of \Cref{sec:node.level}, for every graph $F \in \spasm(G)$, we pick an arbitrary vertex as its anchor to create $F'$. For each vertex $v$ of input graph $H$ we then compute the vector
\[
\mathbf{f}_v := (\homs(F',H)[\anchor \to v])_{F \in \spasm(G)}
\]
for some fixed order on the graphs of $\spasm(G)$. We then associate each $\mathbf{f}_v$ to node $v$ as additional features. The details in which this vector is added depend on the experiment and is described in the respective experiment section.

The motivation for using this approach is that it splits the number of total homomorphisms up ``disjointly", i.e., no homomorphism is counted twice at different vertices. In particular this means that
\begin{equation}
\label{homsum}
\sum_{v \in V(H)} \mathbf{f}_v = (\homs(F,H))_{F \in \spasm(G)}.
\end{equation}
That is, the global counts are simply a sum over the individual feature vectors, without the need to correct for potential double counting of homomorphisms. This thus presents a compromise of preserving the global counts as truthfully as possible, while still providing some node-level information. Although as discussed in \Cref{sec:node.level} and \Cref{app:nodelevel}, precise node-level information requires more fine-grained homomorphism information.

\begin{table*}
\caption{Hyperparameters for ZINC experiments.}
\label{tab:zinc-hp}
\vskip 0.15in
\begin{center}
\begin{small}
\begin{sc}
\begin{tabular}{lcccccc}
\toprule
model                & layers & hidden dim & batch size & max epochs & \#heads & readout \\
\midrule
\multirow{2}{*}{GAT} & 4      & 18       & 128        & 1000       & 8       & mean    \\
                     & 16     & 22       & 128        & 1000       & 8       & mean    \\
\midrule
\multirow{2}{*}{GCN} & 4      & 125      & 128        & 1000       & N/A     & mean    \\
                     & 16     & 172      & 128        & 1000       & N/A     & mean    \\
\midrule
\multirow{2}{*}{GIN} & 4      & 110      & 128        & 1000       & N/A     & sum     \\
                     & 16     & 124      & 128        & 1000       & N/A     & sum     \\
\midrule
BasePlanE            & 3      & 128      & 256        & 500        & N/A     & sum     \\        
\bottomrule
\end{tabular}
\end{sc}
\end{small}
\end{center}
\vskip -0.1in
\end{table*}

\subsection{ZINC Experimental Details}
\label{app:zinc}
For the graph regression task on the ZINC dataset, the objective is to predict the constrained solubility of molecules. We use the same hyperparameters and evaluation protocol as in \citet{dwivedi2023benchmarking}. As such, we use a subset of the ZINC dataset that contains 12,000 individual graphs, which are split into 10,000 graphs for training, 1,000 graphs for validation, and 1,000 graphs for testing. For each graph, the original vertex features denote the type of heavy atom for a given node, and the edge features denote the type of bond between two nodes.  \citet{dwivedi2023benchmarking} do not use edge features when performing model benchmarking, so for comparability, we adhere to their protocol and also omit them from our experiments in Section \ref{sec:gr-zinc}. All results are reported as the average of 4 runs at different seeds.

\subsubsection{Model selection and training}
For ZINC, we select GAT \cite{VelickovicCCRLB18}, GCN \cite{Kipf16}, GIN \cite{xu18}, and BasePlanE \cite{Dimitrov2023} as our baseline models. This is because GAT, GCN, and GIN are all standard architectures, and we want to evaluate how effective the homomorphism basis counts are at elevating the performance and expressivity of these models. We select BasePlanE because it is a recent strong architecture that is maximally expressive over planar graphs (all graphs in ZINC are planar), and it has been show to perform very well on several molecular benchmarks. Both GIN and BasePlanE theoretically reach 1-WL expressiveness, so we are able to demonstrate the fine-grained expressiveness of our approach by adding only the higher-order basis terms to the model. Note that our reported results for GIN are slightly stronger than those presented in the original benchmark because we use a more standard implementation of GIN from the PyTorch Geometric library. 

In terms of training, we use the same hyperparameters for GAT, GCN, and GIN as reported by \citet{dwivedi2023benchmarking}, and the same hyperparameters for BasePlanE as reported by \citet{Dimitrov2023}. We also list these hyperparameters in \Cref{tab:zinc-hp}. Following the setup from \citet{dwivedi2023benchmarking}, we test GAT, GCN, and GIN with both 4 and 16 layers. The full results for ZINC are in Table \ref{tab:zinc-full}. For GAT, GCN, and GIN, the training is done using the following configuration: $optimizer=Adam$, $initial\_lr = 0.001$, $lr\_reduce\_factor=0.5$, $minimum\_lr=1e-5$, $patience=10$. For BasePlanE, the training is done with same configuration as listed, but with a $patience=30$ instead of $10$.

Each model is evaluated using the mean average error (MAE), which is defined as:

\begin{equation}
    \text{MAE} = \frac{\sum_{i=1}^n |y_i - x_i|}{n}
\end{equation}

where $y_i$ is the predicted value, $x_i$ is the true value, and $n$ is the total number of data points. 

In addition to providing the full extended results in \Cref{tab:zinc-full}, we also provide a comparison of our BasePlanE results to other leading models on the ZINC12k dataset (without edge features) in \Cref{tab:bp-zinc-compare}. From this comparison, we can see that with the addition of the $\spasm$ and $\spasm^\anchor$ homomorphism counts for $C_3,...,C_8$, we are able to achieve extremely competitive results. This further shows that our method does not only work on weak baseline models, but it can also boost the performance of more complex GNN architectures.

\subsubsection{Selecting the graph motif parameters}
We select cycles as our patterns of interest because it is well established that rings are an important substructure in molecules. We use cycles up to length 8 for our experiments in Table \ref{tab:basis-v-homiso.column} because \citet{BouritsasFZB23} found that it yielded the best performance for their GSN-v model on the ZINC dataset. Note that we take the counts for $\spasm(C_7) \cup \spasm(C_8)$ since this contains all the component elements that make up the basis for $C_3 - C_8$. Additionally, as mentioned in Section \ref{sec:efficient}, we can use these $\spasm$ and $\spasm^\anchor$ counts to quickly calculate graph-level or node-level subgraph counts, respectively. Therefore, we also include these subgraph counts as additional features since this information directly follows from computing the homomorphism counts of the entire basis of interest. In each model, the count features are encoded using a 2-layer MLP and concatenated to the initial node feature embeddings. 

This 2-layer MLP that we use to encode the count features differs from the approach taken by \citet{BarceloGRR21}, who use the z-score of the log-normalized count values. However, we found that this normalization method actually degraded the performance of using $\subs$ counts. Therefore, we decided to pass the additional count features into an MLP, which would yield more fair and comparable results.

\begin{table*}
\caption{Extended MAE results on ZINC12k graph regression (without edge features) with $C_3,...,C_8$.}
\label{tab:zinc-full}
\vskip 0.15in
\begin{center}
\begin{small}
\begin{sc}
\begin{tabular}{lcccccc}
\toprule
Model & Layers & Base & $\subs$ & $\homs$ & $\spasm$  & $\spasm^{\anchor}$  \\
\midrule
\multirow{2}{*}{GAT} & 4      & 0.457$\pm 0.004$    & 0.210$\pm 0.006$  & 0.269$\pm 0.033$ & 0.155$\pm 0.006$   & 0.147$\pm 0.004$ \\
                     & 16     & 0.380$\pm 0.009$    & 0.201$\pm 0.004$  & 0.229$\pm 0.008$ & 0.155$\pm 0.008$   & 0.152$\pm 0.007$ \\
\multirow{2}{*}{GCN} & 4      & 0.417$\pm 0.007$    & 0.206$\pm 0.006$  & 0.254$\pm 0.017$ & 0.166$\pm 0.003$   & 0.165$\pm 0.004$ \\
                     & 16     & 0.282$\pm 0.007$    & 0.198$\pm 0.003$  & 0.235$\pm 0.005$ & 0.167$\pm 0.007$   & 0.166$\pm 0.005$ \\
\multirow{2}{*}{GIN} & 4      & 0.294$\pm 0.012$    & 0.147$\pm 0.006$  & 0.208$\pm 0.025$ & 0.158$\pm 0.004$   & 0.146$\pm 0.005$ \\
                     & 16     & 0.246$\pm 0.019$    & 0.143$\pm 0.002$  & 0.197$\pm 0.015$ & 0.157$\pm 0.005$   & 0.143$\pm 0.004$ \\
BasePlanE            & 3      & 0.124$\pm 0.004$    & 0.108$\pm 0.002$  & 0.106$\pm 0.004$ & 0.104$\pm 0.005$   & 0.100$\pm 0.002$ \\
\bottomrule
\end{tabular}
\end{sc}
\end{small}
\end{center}
\vskip -0.1in
\end{table*}

\begin{table*}
\caption{Comparison of BasePlanE results with other leading models on the ZINC12k dataset (without edge features).}
\label{tab:bp-zinc-compare}
\vskip 0.15in
\begin{center}
\begin{small}
\begin{sc}
\begin{tabular}{lc}
\toprule
model                & MAE \\
\midrule
GCN         & $0.278 \pm \tiny0.003$ \\
GIN(-E)     & $0.387 \pm \tiny0.015$ \\
PNA         & $0.320 \pm \tiny0.032$ \\
GSN         & $0.140 \pm \tiny0.006$ \\
CIN         & $0.115 \pm \tiny0.003$ \\
BasePlanE 	& $0.124 \pm \tiny0.004$ \\
BasePlanE + $\subs$ 	& $0.108 \pm \tiny0.002$ \\
BasePlanE + $\homs$ 	& $0.106 \pm \tiny0.004$ \\
BasePlanE + $\spasm$ 	& $0.104 \pm \tiny0.005$ \\
BasePlanE + $\spasm^{\anchor}$ 	& $0.100 \pm \tiny0.002$ \\

\bottomrule
\end{tabular}
\end{sc}
\end{small}
\end{center}
\vskip -0.1in
\end{table*}

\subsection{QM9 Experimental Details}
\label{app:qm9}

For the graph regression task on the QM9 dataset, the objective is to predict 13 different target properties for the set of molecules \cite{wu2018moleculenet}. The entire dataset contains 130,831 graphs, which are split into 110,831 for training, 10,000 for validation, and 10,000 for testing. The original node features denote atom type as well as other chemical information about the atom. Similarly to ZINC, the edge features denote the type of bond between two nodes. 

We select R-GCN \cite{schlichtkrull2018modeling} as our base model, report the baseline results from \citet{Brockschmidt20}, and adapted the codebase from \citet{AbboudDC22} to perform our experiments. Our implementation of R-GCN with the addition of the homomorphism counts from $ \graphs^{\sf con}_{\leq 5} \cup C_6$ uses the following hyperparameters: $batch\_size=128$, $hidden\_dim=128$, $layers=8$, $lr=0.001$, $aggregation=mean$, $epochs=300$, and $optimizer=Adam$. We also use both batch and layer normalization and include residual connections at a frequency of every 2 layers. Our results in Table \ref{tab:qm9.smol} are reported as an average across 5 different runs. On average, the model training time for R-GCN on one property took roughly 5 hours to complete, which is significantly longer than the time it took to compute all of the homomorphism feature counts for the entire QM9 dataset (see \Cref{tab:counting-time}). This further highlights the negligible compute cost to generate the homomorphism counts.

\citet{Alon-ICLR21} presented a slightly modified version of R-GCN that uses a fully-adjacent (FA) layer at the very end of the model. They show that this helps counteract the signal loss due to over-squashing effects. We also test with this model configuration and find that using the homomorphism basis counts continue to improve the performance of the R-GCN+FA model (see Table \ref{tab:qm9}). For this experiment, we align with the R-GCN+FA hyperparameter setup which uses: $batch\_size=128$, $hidden\_dim=128$, $layers=8$, $lr=0.000572$, $aggregation=sum$, $epochs=400$, and $optimizer=RMSProps$. We use batch and layer normalization, as well as residual connections every 2 layers.

We notice that for all but one property, the R-GCN + $\homs(\graphs^{\sf con}_{\leq 5}, C_6)$ model outperforms the R-GCN + FA model. This suggests that for tasks where the molecular structure is important, the graph-level homomorphism counts are still enough to provide the model with useful information, despite potential signal loss from oversquashing. Therefore, our method can be seen as complementary to using a final FA layer to help further boost the performance of an originally weak model. 

\subsubsection{Selecting the graph motif parameters}
Because there are multiple different target tasks for the QM9 dataset, we follow \Cref{prop:indsubs} and decide to take the homomorphism counts for all connected components that contain up to 5 vertices, $\homs(\graphs^{\sf con}_{\leq 5})$. In addition to this set, we also include $\homs(C_6)$ because it is well established that 6-cycles are prevalent within molecules. Also, this ensures that $\spasm(C_6)$ is contained within this set of features. Since we use R-GCN as our baseline model, we include all homomorphism counts for all components in this set, including the 1-WL ones. For each graph, all counts are encoded as graph-level features. Specifically, we concatenate the raw homomorphism counts at the very end after the message passing layers to the final feature representation vector for the graph before the final MLP decoder.

\begin{table}[t]
\caption{We report the mean absolute error (MAE) for graph regression on QM9. Using the homomorphism counts yields significant improvements over the baseline models (both with and without FA).}
\label{tab:qm9}
\vskip 0.15in
\begin{center}
\begin{small}
\begin{sc}
\begin{tabular}{lrrrr} 
\toprule
Property & R-GCN & + $\homs(\graphs^{\sf con}_{\leq 5} \cup C_6)$ & \multicolumn{1}{c}{+ FA} & \multicolumn{1}{c}{ + FA/\homs} \\ 
\midrule
mu       & 3.21 \vartxtm{0.06}  & 2.29 \vartxtm{0.03}   & 2.91 \vartxtm{0.07}   & 2.15 \vartxtm{0.02}   \\
alpha    & 4.22 \vartxtm{0.45}  & 1.77 \vartxtm{0.05}   & 2.14 \vartxtm{0.08}   & 1.70 \vartxtm{0.05}   \\
HOMO     & 1.45 \vartxtm{0.01}  & 1.30 \vartxtm{0.03}   & 1.37 \vartxtm{1.41}   & 1.22 \vartxtm{0.02}   \\
LUMO     & 1.62 \vartxtm{0.04}  & 1.41 \vartxtm{0.02}   & 1.41 \vartxtm{0.01}   & 1.27 \vartxtm{0.00}   \\
gap      & 2.42 \vartxtm{0.14}  & 2.00 \vartxtm{0.04}   & 2.03 \vartxtm{0.03}   & 1.81 \vartxtm{0.03}   \\
R2       & 16.38 \vartxtm{0.49} & 10.29 \vartxtm{0.35}  & 13.55 \vartxtm{0.50}  & 9.99 \vartxtm{0.33}   \\
ZPVE     & 17.40 \vartxtm{3.56} & 3.03 \vartxtm{0.38}   & 5.81 \vartxtm{0.61}   & 2.86 \vartxtm{0.31}   \\
U0       & 7.82 \vartxtm{0.80}  & 1.09 \vartxtm{0.18}   & 1.75 \vartxtm{0.18}   & 1.03 \vartxtm{0.05}   \\
U        & 8.24 \vartxtm{1.25}  & 1.21 \vartxtm{0.17}   & 1.88 \vartxtm{0.22}   & 1.07 \vartxtm{0.04}   \\
H        & 9.05 \vartxtm{1.21}  & 1.22 \vartxtm{0.14}   & 1.85 \vartxtm{0.18}   & 1.14 \vartxtm{0.12}   \\
G        & 7.00 \vartxtm{1.51}  & 1.14 \vartxtm{0.13}   & 1.76 \vartxtm{0.15}   & 1.19 \vartxtm{0.18}   \\
Cv       & 3.93 \vartxtm{0.48}  & 1.46 \vartxtm{0.08}   & 1.90 \vartxtm{0.07}   & 1.46 \vartxtm{0.07}   \\
Omega    & 1.02 \vartxtm{0.05}  & 0.81 \vartxtm{0.02}   & 0.75 \vartxtm{0.04}   & 0.71 \vartxtm{0.02}   \\
\bottomrule
\end{tabular}
\end{sc}
\end{small}
\end{center}
\vskip -0.1in
\end{table}

\subsection{COLLAB Experimental Details}
\label{app:collab}

COLLAB is a collaboration network graph from Open Graph Benchmark that presents a link prediction task \cite{OGB-NeurIPS2020}. In total, the entire graph contains 1,285,465 edges, but 23,367 of them are used for validation, and 294,466 edges are used for testing. Nodes represent scientists, and edges represent collaborations between two scientists. The nodes also contain 128-dimensional node features that correspond to the average word embeddings for each scientist's papers. Edge features include the year and the edge weight, representing the number of co-authored papers published that year. 

\subsubsection{Model selection and training}
We select GAT, GCN, and GraphSAGE \cite{HamiltonYL17} as our default baseline models. For GCN and GraphSAGE, we use the exact same model and hyperparameters as the default example provided by \citet{OGB-NeurIPS2020}. For GAT, we construct a model that closely aligns with the overall framework of the GCN and GraphSAGE implementations. These model details are outlined in Table \ref{tab:collab-hp}, and our full results are presented in Table \ref{tab:collab-paths.appendix}. All results are reported as an average of 4 runs at different seeds.
 
\subsubsection{Evaluation}
Models are evaluated using the Hits@50 metric, which is a rank-based metric. Each test triplet $t_i = (v_{head},r,v_{tail}) \in T_{test}$, is ranked against a set of corrupted facts that are generated by modifying the head and tail entity of the triplet of interest. Using these corrupted triplets, the model assigns a score to each triplet and sorts them in descending order by their predicted scores. The final $rank(t_i)$ of the ``true" test triplet is computed as its index in the resulting sorted list. Repeating this process for all triplets in the test set $T_{test}$ will result in a set of individual rank scores $Q = \{ rank(t_i), ..., rank(t_n) \}$, that is used to compute the overall performance metric.

The Hits@k metric calculates how many true facts are ranked in the top $k$ positions against their respective negatives, and divides that value by the number of triplets in the test set. For example, if a test set contains two true facts, and one of them ranks first amongst its negatives, while the other one ranks second amongst its negatives, the Hits@1 value would equal $1/2 = 0.5$, whereas the Hits@3 value would equal $2/2 = 1$. The formal definition of Hits@k is given in Equation \ref{eq:hits}:

\begin{equation}
    \label{eq:hits}
    Hits@k := \frac{| \{ {rank(t_i) \in Q \mid rank(t_i) \leq k \}}|}{|Q|}
\end{equation}

\subsubsection{Selecting the graph motif parameters}
Traditional substructure-style approaches \cite{BarceloGRR21} have used cliques as patterns of interest for the COLLAB dataset. However, because COLLAB presents a link prediction task that may require multi-hop reasoning, paths could also be a useful pattern. Paths are interesting because counting the number of $n$-vertex paths in a graph cannot be done by a 1-WL model due to the cyclic components in the homomorphism basis. Also, standard subgraph counting techniques \cite{RibeiroPSAS21} are mostly intractable for graphs as large as COLLAB, whereas our method is practically feasible.

Due to the large graph size, we normalize our raw homomorphism counts using the sine and cosine positional encoding technique from \citet{vaswani2017attention}. We experiment with positional encoding dimensions of size 4, 6, and 8. Specific hyperparameter details are presented in Table \ref{tab:collab-hp}.

\begin{table*}[t]
\caption{Hyperparameters for COLLAB experiments.}
\label{tab:collab-hp}
\vskip 0.15in
\begin{center}
\begin{small}
\begin{sc}
\begin{tabular}{lccccccc}
\toprule
 model&  layers&  hidden dim& batch size&  learning rate&  epochs&  pe dim  & \#heads\\
\midrule
 GAT&  3&  256&  65536&  0.001&  400&  8&   8\\
 GCN&  3&  256&  65536&  0.001&  400&  8&   N/A\\
 GraphSAGE&  3&  256&  65536&  0.001&  400&  6& N/A\\
\bottomrule
\end{tabular}
\end{sc}
\end{small}
\end{center}
\vskip -0.1in
\end{table*}

\begin{table}[t]
\caption{Extended Hits@50 results for link prediction over the COLLAB dataset.}
\label{tab:collab-paths.appendix}
\begin{center}
\begin{small}
\begin{sc}
\begin{tabular}{clccc}
\toprule
add. basis & $\Gamma$ & GCN & GraphSAGE & GAT \\
\midrule
& {\scriptsize ---}       & 46.13\%\tiny{$\pm 2.10$} & 48.85\%\tiny{$\pm 0.43$} & 48.27\%\vartxt{1.05} \\
\midrule

& $K_3$           & 49.41\%\tiny{$\pm 0.42$} & 49.55\%\tiny{$\pm 0.66$} & 49.43\%\vartxt{0.73} \\

& $K_4$           & 47.76\%\tiny{$\pm 0.53$} & 49.29\%\tiny{$\pm 0.30$} & 48.35\%\vartxt{0.85}\\

& $K_5$           & 48.01\%\tiny{$\pm 0.87$} & 49.61\%\tiny{$\pm 0.24$} & 49.75\%\vartxt{0.16}\\

\midrule
\drawPone \drawPtwo \!\! \drawKthree& $P_4$   & 49.59\%\tiny{$\pm 0.23$} & 50.01\%\tiny{$\pm 0.57$} & 50.76\%\vartxt{0.51}\\

\drawStarthree \drawKthreeplusone \drawCfour & $P_5$   & 49.60\%\tiny{$\pm 0.29$} & 49.50\%\tiny{$\pm 0.47$} & 51.55\%\vartxt{0.94} \\

 \drawCfourPlusOne\,  \drawAlmostKfour \drawKthreePlusTwo \drawKthreeplusPtwo \drawStarthreePlusOne  \drawCfive  & $P_6$   & 50.35\%\tiny{$\pm 0.21$} & 50.18\%\tiny{$\pm 0.14$} & { 51.62\%}\vartxt{0.66}\\
\bottomrule
\end{tabular}
\end{sc}
\end{small}
\end{center}
\vskip -0.1in
\end{table}

\subsection{BREC Experimental Details}
\label{app:brec}

BREC is a new synthetic expressiveness dataset, where the task is to distinguish 400 pairs of non-isomorphic graphs \cite{wang2023towards}. The graphs span four different primary categories: basic, regular, extension, and CFI. There are 60 pairs of basic graphs, 140 pairs of regular graphs, 100 pairs of extension graphs, and 100 pairs of CFI graphs. The regular graphs are further subdivided into 50 pairs of simple regular graphs, 50 pairs of strongly regular graphs, 20 pairs of 4-vertex condition graphs, and 20 pairs of distance regular graphs. None of the graphs in this dataset contain any initial node or edge features.

We present our results on BREC in the full context of results from \citet{wang2023towards} in \Cref{tab:n-vertex.big}. We note that the $2$-WL row in \citet{wang2023towards} is labeled as ``$3$-WL" in the original due to a mismatch in which style of $k$-WL is considered in the paper. Since we use $k$-WL to refer to the folklore version of $k$-WL, we have adapted the label from their table accordingly.

\begin{table*}[t]
\caption{Full results for the BREC expressiveness experiment, comparing the performance of all models tested by \citet{wang2023towards}.}
\label{tab:n-vertex.big}
\vskip 0.15in
\begin{center}
\begin{small}
\begin{sc}
\begin{tabular}{lcccccccccc}
\toprule
& \multicolumn{2}{c}{Basic (60)} & \multicolumn{2}{c}{Regular (140)} & \multicolumn{2}{c}{Extension (100)} & \multicolumn{2}{c}{CFI (100)} & \multicolumn{2}{c}{Total (400)} \\
\cmidrule{2-3}
\cmidrule{4-5}
\cmidrule{6-7}
\cmidrule{8-9}
\cmidrule{10-11}
Model & Num & Acc & Num & Acc & Num & Acc & Num & Acc & Num & Acc \\
\midrule

2-WL    &   60  &   100\%   &   50  &   35.7\%  &   100 &   100\%   &   60  &   60\%    &   270 &   67.5\%  \\
SPD-WL  &   16  &   26.7\%  &   14  &   11.7\%  &   41  &   41\%    &   12  &   12\%    &   83  &   20.8\%  \\
$S_3$   &   52  &   86.7\%  &   48  &   34.3\%  &   5   &   5\%     &   0   &   0\%     &   105 &   26.2\%  \\
$S_4$   &   60  &   100\%   &   99  &   70.7\%  &   84  &   84\%    &   0   &   0\%     &   243 &   60.8\%  \\
$N_1$   &   60  &   100\%   &   99  &   85\%    &   93  &   93\%    &   0   &   0\%     &   252 &   63\%    \\
$N_2$   &   60  &   100\%   &   138 &   98.6\%  &   100 &   100\%   &   0   &   0\%     &   298 &   74.5\%  \\
$M_1$   &   60  &   100\%   &   50  &   35.7\%  &   100 &   100\%   &   41  &   41\%    &   251 &   62.8\%  \\
\midrule

NGNN    &   59  &   98.3\%  &   48  &   34.3\%  &   59  &   59\%    &   0   &   0\%     &   166 &   41.5\%  \\
DE+NGNN &  60  &   100\%   &   50  &   35.7\%  &   100 &   100\%   &   21  &   21\%    &   231 &   57.8\%   \\
DS-GNN  &   58  &   96.7\%  &   48  &   34.3\%  &   100 &   100\%   &   16  &   16\%    &   222 &   55.5\%  \\
DSS-GNN &  58  &   96.7\%  &   48  &   34.3\%  &   100 &   100\%   &   15  &   15\%    &   221 &   55.2\%   \\
SUN     &   60  &   100\%   &   50  &   35.7\%  &   100 &   100\%   &   13  &   13\%    &   223 &   55.8\%  \\
SSWL\_P  &   60  &   100\%   &   50  &   35.7\%  &   100 &   100\%   &   38  &   38\%    &   248 &   62\%   \\
GNN-AK  &   60  &   100\%   &   50  &   35.7\%  &   97  &   97\%    &   15  &   15\%    &   222 &   55.5\%  \\
KP-GNN  &   60  &   100\%   &   106 &   75.7\%  &   98  &   98\%    &   11  &   11\%    &   275 &   68.8\%  \\
$\text{I}^2$-GNN& 60 & 100\%&   100 &   71.4\%  &   100 &   100\%   &   21  &   21\%    &   281 &   70.2\%  \\
PPGN   &   60  &    100\%  &   50  &   35.7\%  &   100 &   100\%   &   23  &   23\%    &   233 &   58.2\%  \\
$\delta$-k-LGNN& 60 & 100\% &   50  &   35.7\%  &   100 &   100\%   &   6   &   6\%     &   216 &   54\%    \\
KC-SetGNN&  60  &   100\%   &   50  &   35.7\%  &   100 &   100\%   &   1   &   1\%     &   211 &   52.8\%  \\
GSN     &   60  &   100\%   &   99  &   70.7\%  &   95  &   95\%    &   0   &   0\%     &   254 &   63.5\%  \\
DropGNN &   52  &   86.7\%  &   41  &   29.3\%  &   82  &   82\%    &   2   &   2\%     &   177 &   44.2\%  \\
OSAN    &   56  &   93.3\%  &   8   &   5.7\%   &   79  &   79\%    &   5   &   5\%     &   148 &   37\%    \\
Graphormer&16  &   26.7\%  &   12  &   10\%    &   41  &   41\%    &   10  &   10\%    &   79  &   19.8\%  \\
\midrule

GIN + $\homs(\graphs_{\leq 5}^{\sf con})$ (Ours)&60&   100\%   &   120 &   85.7\%  &   96  &   96\%    &   0   &   0\%     &   276 &   69\%    \\
PPGN + $\homs(\graphs_{\leq 5}^{\sf con})$ (Ours) &60&  100\%   &   120 &   85.7\%  &   100 &   100\%   &   25  &   25\%    &   305 &   76.25\% \\

\bottomrule
\end{tabular}
\end{sc}
\end{small}
\end{center}
\vskip -0.1in
\end{table*} 
We follow the evaluation protocol of \citet{wang2023towards} and report the best results from 10 seeds to get an upper bound on the expressiveness of each model. The effect of this protocol on our results is minimal. For the top performing PPGN+$\homs(\graphs^{\sf con}_{\leq k})$ we only observe minimal variance over changing seeds. The number of solved CFI graph pairs varied between $22\%$ and $25\%$, and we observed no change in the other categories.

For GIN+$\homs(\graphs^{\sf con}_{\leq k})$, we use the exact same model construction and hyperparameters as was used to benchmark GSN on this dataset. Specifically, we used: $layers=4$, $lr=\text{1e-4}$, $weight\_decay=\text{1e-5}$, $batch\_size=16$, $output\_dimension=64$, $epochs=20$. For PPGN+$\homs(\graphs^{\sf con}_{\leq k})$, we slightly tweak the hyperparameters of the model used in the BREC benchmark by changing the dimensions of the blocks from 32 to 48 in order to accommodate our extra features. Further details, model configurations, and setups can be found in our \href{https://github.com/ejin700/hombasis-gnn}{github repository}.

\end{document}